\documentclass{article} 

\usepackage[sort]{natbib}

\usepackage{iclr2026_conference,times}

\usepackage{hyperref}
\usepackage{url}

\usepackage{bm}
\usepackage{multirow}
\usepackage{booktabs}
\usepackage{algorithm}
\usepackage{algpseudocode}
\usepackage{axlib}

\title{Missingness Bias Calibration in \\Feature Attribution Explanations}

\author{Shailesh Sridhar \\
University of Pennsylvania \\
\texttt{shai2403@seas.upenn.edu}
\AND
Anton Xue \\
University of Texas at Austin \\
\texttt{anton.xue@utexas.edu}
\AND
Eric Wong \\
University of Pennsylvania \\
\texttt{exwong@seas.upenn.edu}
}

\iclrfinalcopy 
\begin{document}

\maketitle

\begin{abstract}

Popular explanation methods often produce unreliable feature importance scores due to missingness bias, a systematic distortion that arises when models are probed with ablated, out-of-distribution inputs.
Existing solutions treat this as a deep representational flaw that requires expensive retraining or architectural modifications.
In this work, we challenge this assumption and show that missingness bias can be effectively treated as a superficial artifact of the model's output space.
We introduce MCal, a lightweight post-hoc method that corrects this bias by fine-tuning a simple linear head on the outputs of a frozen base model.
Surprisingly, we find this simple correction consistently reduces missingness bias and is competitive with, or even outperforms, prior heavyweight approaches across diverse medical benchmarks spanning vision, language, and tabular domains.

\end{abstract}

\section{Introduction}
\label{sec:introduction}

As black-box deep learning systems are increasingly deployed in high-stakes settings such as medicine, finance, and law, there is increasing demand for reliable and trustworthy model explanations.
A common approach for explaining model predictions is to use feature attribution methods, which assign importance scores to input features based on their influence on the output.
Popular methods, such as LIME~\citep{ribeiro2016should} and SHAP~\citep{lundberg2017unified}, estimate these scores by perturbing the input, typically by ablating selected features and measuring the change in prediction.
Because true feature removal is often infeasible (e.g., one cannot physically delete image pixels or omit words from tokenized sequences), attribution methods approximate removal by substituting the selected features with default or placeholder values, such as black pixels or special tokens~\citep{ancona2017towards,sundararajan2017axiomatic}.

These substitutions often result in out-of-distribution inputs that deviate significantly from the model's training data, inducing a systematic distortion in predictions known as \textit{missingness bias}~\citep{hooker2019benchmark,hase2021out,jain2022missingness}.
Such bias can severely undermine the reliability of explanations.
As illustrated in~\cref{fig:brain_mb_example}, a classifier that accurately detects a brain tumor from clean inputs fails to do so when irrelevant features are masked, demonstrating how seemingly innocuous ablations can corrupt model behavior.
Since perturbation-based attributions are derived directly from these corrupted predictions, their reliability is fundamentally compromised, leading to inconsistent feature importance scores~\citep{hooker2019benchmark,duan2024evaluation,goldwasser2024provably}.
This also opens the door to adversarial manipulation: malicious actors can exploit this vulnerability to design deceptive models that obscure their use of sensitive attributes such as race or gender~\citep{slack2020fooling,joe2022exploiting,koyuncu2024exploiting}.

A variety of mitigation strategies have been proposed to address missingness bias.
\textit{Replacement-based} methods aim to reduce distributional shift by imputing masked features with more realistic content~\citep{sturmfels2020visualizing,chang2018explaining,agarwal2020explaining,kim2020interpretation}.
\textit{Training-based} methods fine-tune or retrain the model to better handle ablations~\citep{hooker2019benchmark,hase2021out,rong2022consistent,park2024geometric}, while \textit{architecture-based} approaches embed robustness directly into the model via structural design changes~\citep{jain2022missingness,balasubramanian2023towards}.

However, these strategies are often impractical.
Replacement-based methods are usually specialized to specific domains (e.g., text~\citep{kim2020interpretation}) or might require training model-specific imputations~\citep{chang2018explaining}.
On the other hand, training-based solutions require intensive engineering and computing resources, while architecture-based modifications require a deep understanding of model internals.
Moreover, it is also increasingly common that models are complete black boxes, such as when interacting with API-based LLM providers.

In this work, we question whether such complex interventions are necessary.
We investigate a simple yet surprisingly powerful strategy for mitigating missingness bias:
finetuning a linear head on the outputs of a frozen base model.
This approach, which we call \textbf{MCal}, is \textit{lightweight}, \textit{model-agnostic}, and \textit{post-hoc}.
It is significantly cheaper in implementation effort than training-based methods, does not require model-specific adaptations like architecture-based and replacement-based methods, and only needs access to the model's output logits.
In the following, we summarize the development of MCal and our contributions.

\begin{figure}[t]

\centering









\begin{minipage}{0.25\linewidth}
    \centering
    \includegraphics[width=1.2in]{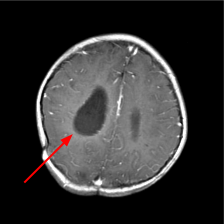}

    Tumor \cmark{}
\end{minipage}%
\qquad
\begin{minipage}{0.25\linewidth}
    \centering
    \includegraphics[width=1.2in]{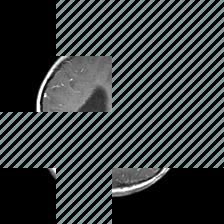}

    Tumor \cmark{}
\end{minipage}%
\qquad
\begin{minipage}{0.25\linewidth}
    \centering
    \includegraphics[width=1.2in]{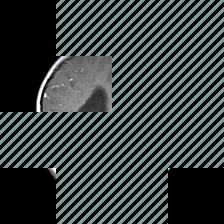}

    Healthy \xmark{}
\end{minipage}%

\caption{\textbf{Removing irrelevant features can cause a misdiagnosis.}
A fine-tuned ViT~\citep{dosovitskiy2020image} correctly predicts ``tumor'' on the clean image (left) and a subset of the relevant features (middle).
However, masking irrelevant features flips the prediction to ``healthy'', despite the tumor remaining visible (right).
For visualization, gray stripes denote zero-valued pixels, and images are contrast-boosted.
}
\label{fig:brain_mb_example}
\end{figure}

\paragraph{A New Perspective on Missingness Bias.}
We find that missingness bias, a problem often treated as a deep representational flaw, can be effectively mitigated with a simple post-hoc correction in the model's output space.
This finding suggests the bias is often a superficial artifact, challenging the prevailing assumption that expensive retraining or architectural modifications are necessary.

\paragraph{A Lightweight Method with Theoretical Guarantees.}
We formalize this approach as MCal, a lightweight calibrator that is highly efficient to optimize (\cref{sec:method}).
Furthermore, our simple formulation provides theoretical guarantees of convergence to a globally optimal solution, ensuring a level of stability and reproducibility rare for deep learning interventions.

\paragraph{A Strong and Practical Baseline.}
We demonstrate MCal's effectiveness across diverse models and data modalities, where it is often competitive with heavyweight approaches (\cref{sec:experiments}).
This establishes a strong and practical baseline that can be immediately adopted by researchers and practitioners to improve the reliability of their explanations.

\section{Understanding Missingness Bias}
\label{sec:background}

Perturbation-based feature attribution methods like LIME~\citep{ribeiro2016should} and SHAP~\citep{lundberg2017unified} evaluate models on inputs with ablated features, typically replaced by fixed baseline values (e.g., zero-vectors or mean-pixel values).
However, because these synthetic inputs often fall outside the model's training distribution, they can induce systematic prediction distortions, a phenomenon known as \textit{missingness bias}.
This section provides a background on this bias and its consequences for explanation reliability.

\begin{figure}[t]
\centering









\includegraphics[width=0.95\linewidth]{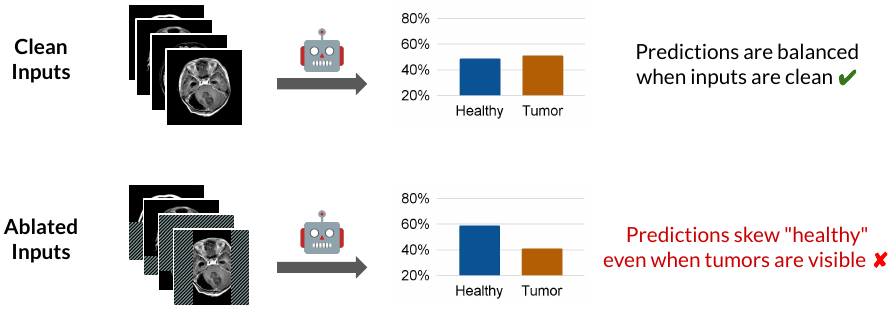}

\caption{\textbf{Feature ablations induce class distribution shifts.}
Masking non-critical regions skews predictions towards the ``healthy'' class, even when tumors remain visible.
This effect, known as \textit{missingness bias}, causes the model to misclassify inputs that retain relevant features, and undermines the reliability of feature attribution explanations.
}
\label{fig:mb_distribution_shift_main_paper}
\end{figure}

\subsection{Pathology: Symptoms and Measurements}

The effects of missingness bias are not merely statistical curiosities; they manifest as tangible failures that undermine the reliability of explanation methods.

\paragraph{Systematic Skew in Predictions.}
The most direct failure mode of missingness bias is a systematic skew in model predictions~\citep{jain2022missingness}.
As illustrated in~\cref{fig:mb_distribution_shift_main_paper}, the model's accuracy degradation is not random but systematic: it develops a consistent bias towards one class (in this case, ``healthy'') even when the core evidence for the correct class remains visible.
This failure mode is particularly pernicious, persisting even when we selectively avoid masking the central image patches most likely to contain the tumor.

\paragraph{Unreliable Feature Attributions.}
Another consequence of this degraded accuracy is that any feature attributions derived from the model are fundamentally unreliable.
If a model's predictions are incorrect on ablated inputs, the importance scores computed from these predictions cannot be trusted to reflect the model's true reasoning.
Empirical findings support this; for instance,~\citet{jain2022missingness} show that feature importance scores from models with high missingness bias fail standard robustness tests such as top-k removal.
Prior work has also shown that minor changes to the ablation process can yield vastly different explanations, suggesting they reflect perturbation artifacts rather than genuine model logic~\citep{hooker2019benchmark}.

\paragraph{Quantifying Missingness Bias.}
Many feature attribution methods operate under the assumption that feature ablation is a neutral act of intervention intended to simulate the removal of information~\citep{sundararajan2017axiomatic,sturmfels2020visualizing}.
When a model's behavior deviates from this expected neutrality, the resulting shift in its aggregate predictive distribution serves as a direct measure of missingness bias.
This shift is typically quantified as the distribution shift between the class frequencies on the clean data distribution \(\mcal{D}\) versus the ablated data distribution \(\mcal{D}'\)~\citep{jain2022missingness,balasubramanian2023towards}:
\begin{equation} \label{eqn:mb_def}
    \msf{MissingnessBias}(f)
    = D_{\mrm{KL}} \parens*{
        \expval{x' \sim \mcal{D}'} \msf{Class}(f(x'))
        \,\big\|\,
        \expval{x \sim \mcal{D}} \msf{Class}(f(x))
    },
\end{equation}
where \(\mcal{D}'\) is the distribution of inputs where each feature is i.i.d. ablated with some given probability, and let \(\msf{Class}(f(x))\) be the one-hot vector representation of the class predicted by \(f\) on \(x\).
The above can then be understood as a measure of information-theoretic ``surprise'' when \(f\) is evaluated on \textit{unbiased} ablations, supposing only knowledge of its behavior on clean inputs.
In particular,~\citet{jain2022missingness} specifically introduces this to measure missingness bias, rather than of adjacent phenomena, such as prediction sensitivity with respect to top-\(k\) feature selections~\citep{hase2021out}.

\subsection{The Challenge of Mitigation}
\label{sec:challenge_mitigation}

A variety of strategies have been proposed to address missingness bias, which can be broadly categorized as follows:

\begin{itemize}
    \item \textit{Replacement-based.} These methods aim to make ablated inputs appear more in-distribution.
    Beyond simple values (e.g., zero and mean-valued~\citep{hase2021out}), more complex variants include marginalization, which averages outputs over plausible replacement values~\citep{kim2020interpretation,frye2020shapley,chirkova2023should,vo2024explainability,haug2021baselines}, random noising~\citep{rong2022consistent}, and generative modeling, which uses a secondary model to in-paint realistic content~\citep{chang2018explaining,agarwal2020explaining}.
    However, these approaches are often complex and can introduce their own artifacts.

    \item \textit{Training-based.} This approach treats feature ablations as a form of data augmentation.
    Methods like ROAR~\citep{hooker2019benchmark} and GOAR~\citep{park2024geometric} retrain or fine-tune the model on masked inputs to align its train and test distributions.
    Although effective at building robust representations, this strategy is computationally expensive and only possible when the model can be modified.
     
    \item \textit{Architecture-based.} These methods embed robustness directly into the model's design.
    For example, modified vision transformers~\citep{jain2022missingness,dosovitskiy2020image} and CNNs~\citep{balasubramanian2023towards} can be altered to use dedicated mask tokens or explicitly suppress the influence of ablated regions.
    However, these changes are often non-trivial, architecture-specific, and not generalizable.
    
\end{itemize}

While often effective, the high cost and complexity of these methods make them impractical for many modern use cases, especially those involving large-scale, pre-trained foundation models.
Furthermore, such approaches are entirely infeasible when working with API-based models that do not permit retraining or architectural changes.
This gap highlights the need for a practical, lightweight, and model-agnostic approach to mitigating missingness bias that we introduce next.

\section{MCal: A Lightweight Calibrator for Missingness Bias}
\label{sec:method}

\begin{figure}[t]
\centering

\includegraphics[height=0.9in]{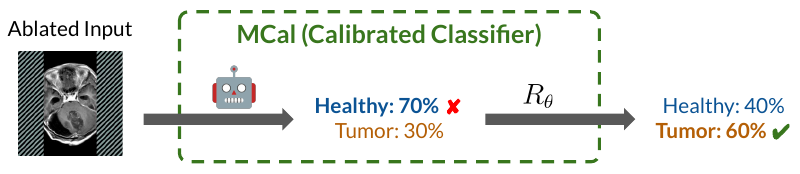}

\caption{\textbf{MCal corrects class distribution shifts induced by input ablations.}
The model initially predicts ``healthy'' from the ablated input.
MCal applies a learned transformation \(R_\theta\) to adjust the output probabilities, thereby restoring alignment with expected class distributions.
This calibration method is model-agnostic, requiring only the classifier's output probabilities of each class.
}
\label{fig:mcal_pipeline}
\end{figure}

Having established the pathology of missingness bias and the practical limitations of existing heavyweight solutions, we now introduce our method.
We propose \textbf{MCal}, a lightweight, post-hoc correction that is surprisingly effective at mitigating missingness bias.

\subsection{Architecture and Optimization}
The calibration process is illustrated in~\cref{fig:mcal_pipeline}.
A base classifier \(f: \mbb{R}^n \to \mbb{R}^m\) first processes an input \(x\) to output the \textit{raw logits} \(z = f(x)\).
A calibrator \(R_\theta : \mbb{R}^m \to \mbb{R}^m\) then transforms the raw logits into the \textit{calibrated logits} \(R_\theta (z)\).
Specifically, we implement this as an affine transform:
\begin{equation}
    R_\theta (z) = Wz + b,
\end{equation}
where the calibrator is parametrized by \(\theta = (W, b)\), with \(W \in \mbb{R}^{m \times m}\) and \(b \in \mbb{R}^m\).
To fit the calibrator, we use a standard cross-entropy objective that aligns the calibrated prediction on an ablated input with the base model's prediction on the clean input:
\begin{equation} \label{eqn:finetune_loss}
    \mathcal{L}(\theta) = \expval{(x, x') \sim \mcal{D}} \mrm{CrossEntropy}\bracks*{R_\theta (f(x')), \msf{Class}(f(x)) },
\end{equation}
where \((x, x') \sim \mcal{D}\) are samples of a clean input \(x\) and its ablated version \(x'\), and \(\msf{Class}(f(x))\) denotes the one-hot prediction on the clean input.

Our approach is deliberately minimalist, prioritizing efficiency without compromising performance.
We apply a standard cross-entropy objective, identical to that used in heavyweight retraining methods~\citep{hooker2019benchmark}, but only to a lightweight matrix-scaling calibrator~\citep{guo2017calibration}.
This design is highly efficient, with orders of magnitude fewer parameters (\(m^2 + m\)) than fine-tuning or even parameter-efficient methods like LoRA~\citep{hu2022lora}.
Our experiments in~\cref{sec:experiments} confirm that this minimalist approach is, in fact, sufficient to yield competitive performance with more engineering-intensive approaches like retraining the model or architecture modifications.
Furthermore, this simple design also comes with strong theoretical guarantees on its optimization process, which we detail next.

\subsection{Theoretical Guarantees and Geometric Interpretation}

Our affine parametrization of \(R_\theta\) means that standard gradient-based optimization will provably converge to an optimal solution, which we formalize as follows.

\begin{theorem}[Guaranteed Optimal Convergence]
\label{thm:mcal_optimal}
The MCal objective \(\mcal{L}(\theta)\) is convex in \(\theta\).
\end{theorem}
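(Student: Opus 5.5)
The plan is to write the per-sample loss as a convex function composed with an affine map of \(\theta\). Then convexity passes to the expectation because nonnegative combinations (and integrals) of convex functions are convex.

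First, fix a sample pair \((x, x')\). Set \(z' = f(x') \in \mbb{R}^m\) and let \(y = \msf{Class}(f(x))\) be the one-hot target, with the predicted class index \(c\). The key observation is that the base model \(f\) is frozen, so \(z'\) and \(c\) are constants that do not depend on \(\theta\). The calibrated logits are \(u(\theta) = W z' + b\), which is a linear function of \(\theta = (W, b)\). Concretely, each coordinate \(u_j = \sum_k W_{jk} z'_k + b_j\) is linear in the entries of \(W\) and \(b\). The per-sample loss is
\begin{equation*}
    \ell(\theta) = -u_c(\theta) + \log \sum_{j=1}^m \exp\parens*{u_j(\theta)} = g(u(\theta)),
\end{equation*}
where \(g(u) = \mathrm{LSE}(u) - u_c\).

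Second, show that \(g\) is convex on \(\mbb{R}^m\). The term \(-u_c\) is linear. For log-sum-exp, compute the Hessian \(\nabla^2 \mathrm{LSE}(u) = \mathrm{diag}(p) - p p^\top\) with \(p = \mathrm{softmax}(u)\). For any \(v\), we have \(v^\top(\mathrm{diag}(p) - pp^\top)v = \sum_j p_j v_j^2 - (\sum_j p_j v_j)^2\). This equals the variance of \(v\) under \(p\), so it is nonnegative. Hence LSE is convex. The same conclusion also follows from Hölder's inequality if one prefers to avoid the Hessian.

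Third, conclude. A convex function composed with an affine map is convex, so each \(\ell\) is convex in \(\theta\). Then \(\mcal{L}(\theta) = \mathbb{E}[\ell_{(x,x')}(\theta)]\) is an expectation of convex functions. Convexity is preserved here because, for \(\lambda \in [0,1]\), the pointwise inequality \(\ell(\lambda\theta_1 + (1-\lambda)\theta_2) \le \lambda \ell(\theta_1) + (1-\lambda)\ell(\theta_2)\) integrates by monotonicity and linearity of expectation. This assumes the expectation is finite. Finiteness holds whenever \(\mathbb{E}\|f(x')\|\) is finite, since \(|\ell| \le 2\|u\|_\infty + \log m\). For an empirical dataset it is just a finite average, so there is no issue.

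I expect no real obstacle. The only step requiring care is the convexity of log-sum-exp, handled by the variance identity above. One caveat to remark on is that the statement asserts convexity, not strict convexity. The objective is invariant under adding a common constant to all calibrated logits, so \(\theta \mapsto (W + \mathbf{1}a^\top, b + \beta\mathbf{1})\) leaves it unchanged. Therefore the minimizer is generally not unique, and the "optimal convergence" claim should be read as convergence in objective value to the global minimum. That is standard for convex problems, assuming a minimizer exists.
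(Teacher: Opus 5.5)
Your proposal is correct and follows the same route as the paper, which argues in one line that \(\mathcal{L}\) is the composition of the convex cross-entropy loss with an affine map of \(\theta\). You fill in the details the paper leaves implicit: convexity of log-sum-exp via the variance form of its Hessian, and preservation of convexity under the expectation. Your closing caveat goes beyond the paper's proof and is well taken. The loss is invariant under \((W,b)\mapsto(W+\mathbf{1}a^\top, b+\beta\mathbf{1})\), and the infimum need not be attained, so convexity alone guarantees neither a unique minimizer nor that one exists.
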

\begin{proof}
The function \(\mcal{L}(\theta)\) is convex in \(\theta\), as it is a composition of the convex cross-entropy loss and an affine transformation.
Because local minimums are also global minimums for convex functions, standard gradient-based optimization (e.g., SGD, Adam) will converge to an optimal solution.
\end{proof}

The importance of this guarantee is twofold.
First, it ensures reproducibility and stability: the optimization process is guaranteed to converge to the same optimal solution, reducing the need for extensive hyperparameter sweeps or random seed searches.
Second, it provides a strong assurance of quality, guaranteeing that the resulting calibrator is a globally optimal affine correction for the given data.

\paragraph{Geometric Interpretation.}
MCal also has a clear geometric interpretation, visualized in~\cref{fig:simplex_calibration}.
The uncalibrated outputs form biased point clouds on the probability simplex, with the Class A cluster pulled towards the Class B vertex, leading to systematic misclassification.
MCal learns an optimal affine transformation in the logit space that rotates, scales, and shifts these distributions.
This untangles the clouds and pushes them towards their correct vertices.
\cref{thm:mcal_optimal} guarantees that this correction is globally optimal for our parametrization.

\begin{figure}[t]
    \centering
    \begin{minipage}{0.95\textwidth}
    \centering
    \begin{tikzpicture}[remember picture]
        
        \node (leftfig) {
            \subcaptionbox*{Uncalibrated Accuracy: 59.33\%\label{fig:simplex_a}}{%
                \includegraphics[height=1.2in]{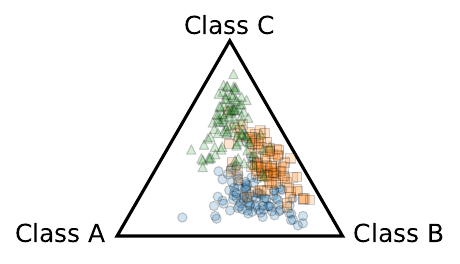}%
            }
        };

        \node at ([xshift=1.6in]leftfig.east) (rightfig) {
            \subcaptionbox*{Calibrated Accuracy: 93.00\% \label{fig:simplex_b}}{%
                \includegraphics[height=1.2in]{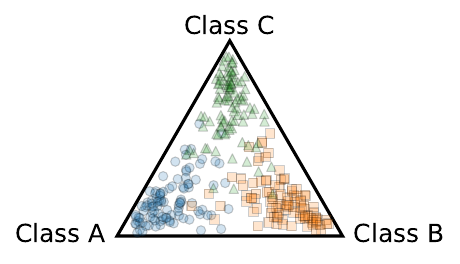}%
            }
        };

    \end{tikzpicture}
    
    \begin{tikzpicture}[overlay, remember picture]
        \draw[->, thick, shorten >=4pt, shorten <=4pt, color=gray, bend left=10]
            (leftfig.east) to node[above, font=\small, black] {MCal} (rightfig.west);
    \end{tikzpicture}
    \end{minipage}


    \caption{\textbf{Geometric intuition of MCal on a synthetic dataset.}
    Missingness bias causes the uncalibrated outputs to shift.
    For instance, the Class A cluster (blue circles) is pulled towards the Class B vertex, leading to systematic misclassification and low accuracy.
    MCal applies an optimal affine transformation to the uncalibrated outputs, correcting the shift and improving accuracy.    
    }


    \label{fig:simplex_calibration}
\end{figure}

\subsection{Implementation Considerations}
\label{sec:implementation_considerations}

\paragraph{Conditioning on Ablation Rates.}
Our experience shows that the severity of missingness bias is strongly correlated with the fraction of features that are ablated.
To account for this, we recommend using an ``ensemble'' of specialized calibrators, each one fit for a specific ablation rate (e.g., 10\%, 20\%, etc.).
At inference time, we apply the calibrator that was trained for the ablation rate closest to that of the input.
We study the advantage of this ensemble in~\cref{sec:experiments}.

\paragraph{Integration with Explainers.}
As a post-hoc wrapper, MCal is compatible with any perturbation-based explanation method.
The calibrated model, \(\tilde{f}\), can be used as a drop-in replacement for the original model, \(f\), in any existing explainability pipeline.
The resulting feature attributions are then generated from a model that has been explicitly corrected for the missingness bias induced by the explanation method's own perturbation strategy.

{\color{rev1}
\paragraph{Training Set Size and Overfitting.}
Dense parametrizations of \(W\) risk overfitting when the number of parameters exceeds the number of training samples~\citep{guo2017calibration}, which can occur when there are many classes.
In such cases, the training loss may go to zero while test performance does not improve.
We recommend two strategies to mitigate overfitting.
First, one may consider adding a regularization term to the objective.
Second, one may also consider sparse parametrizations, such as taking \(W\) to be a diagonal matrix (also known as ``vector-scaling''), which would reduce the total parameter count to \(O(m)\).
}


\section{Experiments}
\label{sec:experiments}

We now present experiments to validate the impact of missingness bias in explainability, as well as the ability of MCal to mitigate it.
Moreover, we demonstrate that MCal repeatedly outperforms more expensive baselines, such as full retraining and architecture modifications.
Additional details are given in~\cref{sec:additional_experiments}.


\paragraph{Models, Datasets, and Compute.}
We evaluate on a diverse set of medical benchmarks that span vision (Brain MRI~\citep{nickparvar2021mri}, Chest X-ray (CheXpert)~\citep{irvin2019chexpert}, and Breast Cancer Histopathology (BreakHis)~\citep{spanhol2015dataset}),
language (MedQA~\citep{jin2021disease}, MedMCQA~\citep{pal2022medmcqa}), 
and tabular domains (PhysioNet~\citep{haug2021baselines}, Breast Cancer~\citep{wolberg1993breast}, Cardiotocography (CTG)~\citep{campos2000cardiotocography}).
We respectively evaluate on these domains with ViT-B16~\citep{dosovitskiy2020image}, Llama-3.1-8B-Instruct~\citep{meta2024llama3}, and XGBoost~\citep{chen2016xgboost}, which are trained using standard methods.
For compute, we had access to a machine with four NVIDIA H100 NVL GPUs.

\paragraph{Input Ablations and Calibration.}
We say that an input \(x \in \mbb{R}^n\) has ablation rate \(p = k/n\) if \(k\) of its features are ablated.
To evaluate on a tractable range of \(p\), we use \(p \in \{0/16, 1/16, \ldots, 15/16\}\) for vision, \(p \in \{0/10, 1/10, \ldots, 9/10\}\) for language, and \(p \in \{0/10, 1/10, \ldots, 9/10\}\) for tabular, where recall that we recover the clean input at \(p = 0\).
For imputations, we use zero-valued (black) patches for vision, we replace whitespace-separated words with the special string \texttt{UNKWORDS} for language, and we perform mean imputation for tabular data.
For vision specifically, we select \(k\) patches to ablate, regardless of their original values (e.g., some MRI images already have black patches).
Following discussion from~\cref{sec:implementation_considerations}, the \textit{unconditioned} calibrator was fit on inputs where each feature was uniformly ablated with probability \(1/2\), whereas the \textit{conditioned} ensemble has a calibrator fit at each value of \(p\).
All calibrators were optimized using Adam~\citep{kingma2014adam} with a learning rate of \(10^{-3}\) for \(5000\) steps.

\begin{figure*}[t]

\centering

\begin{minipage}{0.24\linewidth}
    \includegraphics[width=1.0\linewidth]{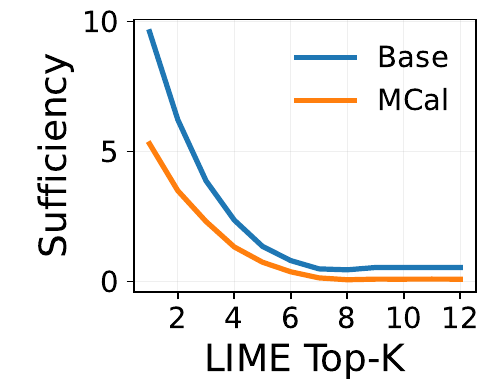}
\end{minipage}%
\begin{minipage}{0.24\linewidth}
    \includegraphics[width=1.0\linewidth]{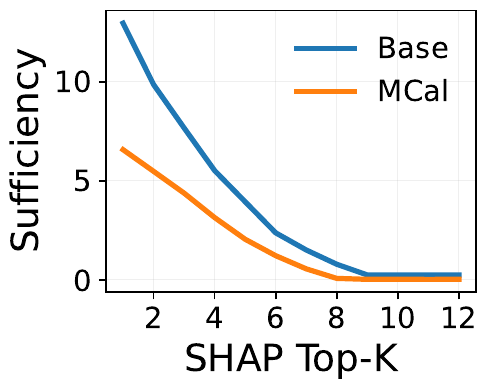}
\end{minipage}%
\begin{minipage}{0.24\linewidth}
    \includegraphics[width=1.0\linewidth]{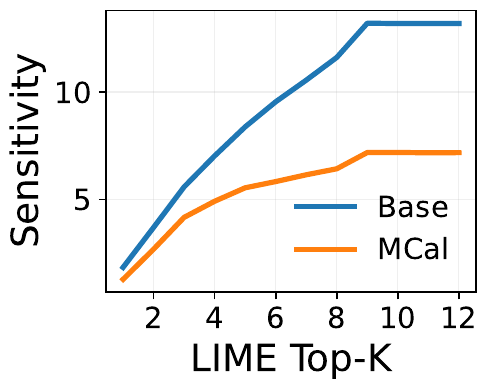}
\end{minipage}%
\begin{minipage}{0.24\linewidth}
    \includegraphics[width=1.0\linewidth]{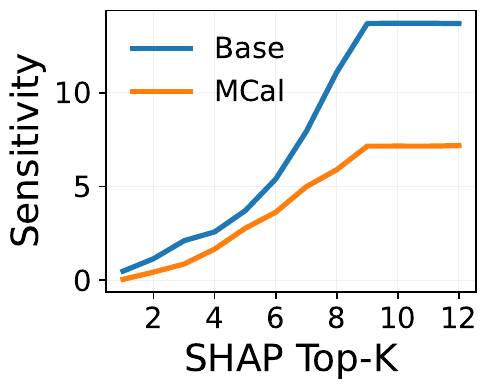}
\end{minipage}%


\caption{
\textbf{Calibrated models have better explanations.}
Compared to an uncalibrated baseline model (Base), LIME and SHAP explanations on MCal-calibrated models have more accurate feature importance scores (sufficiency $\downarrow$).
{\color{rev1}
In addition, calibrated models are also more robust to feature ablations (sensitivity $\downarrow$).
Results are shown for the MRI dataset using an unconditioned calibrator.
}
%
}
\label{fig:main_paper_mcal_other_metrics}
\end{figure*}

\paragraph{Question 1: Do calibrated models lead to better explanations?}
Missingness bias is known to skew the explanation quality of feature attribution methods~\citep{jain2022missingness}.
To that end, we consider how two representative methods, LIME~\citep{ribeiro2016should} and SHAP~\citep{lundberg2017unified}, perform on calibrated vs. uncalibrated models.
These methods output a ranking of each input feature's importance to the model, which we evaluate using the standard \textit{sufficiency} metric~\citep{hase2021out}, detailed in~\cref{sec:other_missingness_metrics}.
Informally, sufficiency measures whether the features identified as important are enough on their own to maintain the model's original prediction confidence (lower values indicate a higher quality ranking).
We report results in~\cref{fig:main_paper_mcal_other_metrics}.

{\color{rev1}\paragraph{Question 2: How does calibration affect model robustness?}
It is generally desirable for models to be robust to feature perturbations, as this can improve generalization and reduce the risk of adversarial behaviors.
To that end, we measure the robustness of the underlying model to the removal (ablation) of features via the \textit{sensitivity} metric, detailed in~\cref{fig:main_paper_mcal_other_metrics}.
We show our results in~\cref{sec:other_missingness_metrics},  which shows that the model is not overly dependent on its top-k features for prediction.
}

\begin{figure*}[t]

\centering

\begin{minipage}{0.32\linewidth}
    \includegraphics[width=1.0\linewidth]{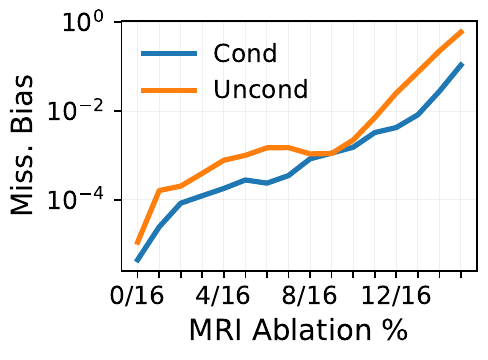}
\end{minipage}%
\begin{minipage}{0.32\linewidth}
    \includegraphics[width=1.0\linewidth]{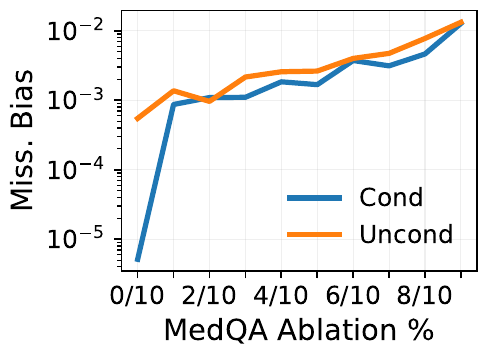}
\end{minipage}%
\begin{minipage}{0.32\linewidth}
    \includegraphics[width=1.0\linewidth]{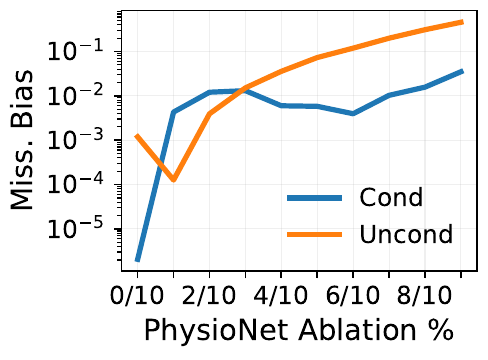}
\end{minipage}%

\caption{\textbf{Conditioning on ablation rate improves MCal.}
Fitting an ensemble of calibrators at a discretized set of ablation rates can help reduce the overall missingness rate, compared to using a single unconditioned calibrator.
(Left) MRI, (Middle) MedQA, (Right) PhysioNet.
}
\label{fig:main_paper_fraction_condition}
\end{figure*}

\paragraph{{\color{rev1}Question 3:} What is the impact of conditioning on feature ablation fractions?}
Rather than fitting a single calibrator, we observe that using an ensemble of calibrators, each conditioned upon a single fraction (ablation rate), can improve performance.
We compare the performance of this conditioning in~\cref{fig:main_paper_fraction_condition}, where we observe an improvement in performance over an unconditioned calibrator.
This is expected, as a model's missingness bias is known to vary with the ablation rate~\citep{jain2022missingness,hooker2019benchmark}, and an ensemble thereby allows each calibrator to specialize to their respective rates.




\begin{table}[t]
\centering
\footnotesize
{\setlength{\tabcolsep}{3pt}
\begin{tabular}[t]{llccccccc}
\toprule
\textbf{} & \textbf{Dataset} & \textbf{Base} & \textbf{Replace} & \textbf{Retrain} & \textbf{Arch} & \textbf{TempCal} & \textbf{PlattCal} & \textbf{MCal} (\cmark) \\
\midrule
\multirow{3}{*}{\textbf{Vision}} 
  & Brain MRI &
    \(1.18\,\mrm{e}{-1}\) & 
    \(1.51\,\mrm{e}{-1}\) & 
    \(\mbf{6.70\,e{-4}}\) & 
    \(1.40\,\mrm{e}{-1}\) & 
    \(1.16\,\mrm{e}{-1}\) & 
    \(1.27\,\mrm{e}{-1}\) & 
    \(7.43\,\mrm{e}{-3}\) \\ 
  & CheXpert &
    \(1.70\,\mrm{e}{-1}\) & 
    \(9.70\,\mrm{e}{-2}\) & 
    \(2.67\,\mrm{e}{-2}\) & 
    \(1.50\,\mrm{e}{-1}\) & 
    \(1.65\,\mrm{e}{-1}\) & 
    \(2.02\,\mrm{e}{-1}\) & 
    \(\mbf{8.82\,e{-3}}\) \\ 
  & BreakHis &
    \(1.87\,\mrm{e}{-1}\) & 
    \(4.20\,\mrm{e}{-1}\) & 
    \(2.19\,\mrm{e}{-2}\) & 
    \(1.54\,\mrm{e}{-1}\) & 
    \(1.86\,\mrm{e}{-1}\) & 
    \(1.66\,\mrm{e}{-1}\) & 
    \(\mbf{4.29\,e{-3}}\) \\ 
\midrule
\multirow{2}{*}{\textbf{Language}} 
  & MedQA &
    \(1.61\,\mrm{e}{-1}\) & 
    \(1.50\,\mrm{e}{-1}\) & 
    \(1.70\,\mrm{e}{-1}\) & 
    \(2.68\,\mrm{e}{-2}\) & 
    \(1.57\,\mrm{e}{-1}\) & 
    \(9.48\,\mrm{e}{-2}\) & 
    \(\mbf{9.44\,e{-4}}\) \\ 
  & MedMCQA &
    \(1.89\,\mrm{e}{-1}\) & 
    \(2.59\,\mrm{e}{-1}\) & 
    \(1.52\,\mrm{e}{-1}\) & 
    \(1.40\,\mrm{e}{-1}\) & 
    \(7.81\,\mrm{e}{-1}\) & 
    \(1.13\,\mrm{e}{-1}\) & 
    \(\mbf{9.01\,e{-3}}\) \\ 
\midrule
\multirow{3}{*}{\textbf{Tabular}} 
  & PhysioNet &
    \(1.17\,\mrm{e}{-1}\) & 
    \(1.20\,\mrm{e}{-1}\) & 
    \(5.59\,\mrm{e}{-3}\) & 
    \(8.14\,\mrm{e}{-2}\) & 
    \(1.17\,\mrm{e}{-1}\) & 
    \(1.19\,\mrm{e}{-1}\) & 
    \(\mbf{5.01\,e{-3}}\) \\ 
  & Breast Cancer &
    \(1.02\,\mrm{e}{-1}\) & 
    \(1.44\,\mrm{e}{-1}\) & 
    \(5.68\,\mrm{e}{-3}\) & 
    \(2.13\,\mrm{e}{-1}\) & 
    \(1.02\,\mrm{e}{-1}\) & 
    \(1.08\,\mrm{e}{-1}\) & 
    \(\mbf{1.92\,e{-5}}\) \\ 
  & CTG &
    \(1.06\,\mrm{e}{-1}\) & 
    \(7.02\,\mrm{e}{-2}\) & 
    \(6.61\,\mrm{e}{-3}\) & 
    \(2.85\,\mrm{e}{-1}\) & 
    \(1.06\,\mrm{e}{-1}\) & 
    \(9.20\,\mrm{e}{-2}\) & 
    \(\mbf{3.35\,e{-3}}\) \\ 
\bottomrule
\end{tabular}
}
\caption{
\textbf{MCal is an effective and lightweight way to reduce missingness bias.}
It repeatedly outperforms more computationally expensive baselines, such as retraining and architecture modification.
We report the KL divergence-based metric in~\cref{eqn:mb_def}.
}
\label{tab:main_paper_baselines}
\end{table}

\paragraph{{\color{rev1}Question 4:} How does MCal compare to the baselines?}

We compare MCal to each of the following prior approaches, which have all been employed in previous work to combat the problem of out-of-distribution inputs.

\begin{itemize}
  \item \textbf{Base}: This is the unmodified, uncalibrated classifier that acts as a reference baseline. 

  
  \item \textbf{Replacement-based (Replace)}: 
  Our implementation of replacement-based mitigation is inspired from~\citet{hase2021out}.
  In particular, for vision, we use the channel-wise mean pixel value of the clean dataset \citep{carter2021overinterpretation}.  
  For language, we drop tokens from the sequence so that the ablated token sequence is shorter in length than the clean one \citep{hase2021out}.
  For tabular, we perform mean imputation.

  \item \textbf{Training-based approaches (Retrain)}: Models are fine-tuned on ablated inputs, where each feature (patch, token) is uniformly ablated with probability \(1/2\).
  
  \item \textbf{Architectural-based (Arch)}: We perform a non-trivial modification of ViT to accept attention masks as in~\citet{jain2022missingness}. 
  For models with architectural support for missing features, we use those: e.g., attention masking in Llama-3 and native support for \texttt{NaN} in XGBoost.

  %

  \item \textbf{Standard calibration (TempCal, PlattCal)}:
  We additionally consider existing calibration-based methods from literature, particularly temperature (TempCal) and vector-scaling Platt calibration (PlattCal), as described in~\citet{guo2017calibration}.



\end{itemize}
We report in~\cref{tab:main_paper_baselines} the average of values from the ensemble of conditioned calibrators.
We found that MCal is often superior even to more computationally and engineering-intensive baselines, such as model retraining and ViT architecture modifications.
In support of our earlier claims, we also observe that MCal outperforms both temperature and Platt calibration.
Replacement-based methods have inconsistent performance, which aligns with known observations on their sensitivity to imputation values.
Finally, we note that architecture-native support for missing features may, in fact, exacerbate missingness bias, as seen in XGBoost on the Breast Cancer and CTG datasets.



\begin{figure*}

\centering

\begin{minipage}{0.32\linewidth}
    \includegraphics[width=1.0\linewidth]{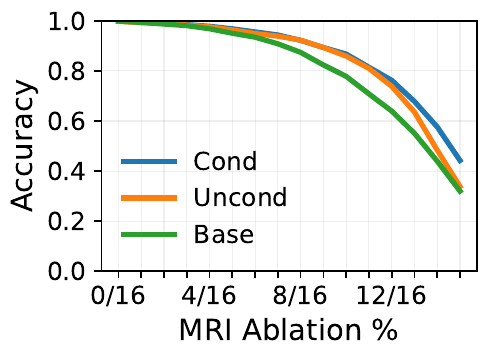}
\end{minipage}%
\begin{minipage}{0.32\linewidth}
    \includegraphics[width=1.0\linewidth]{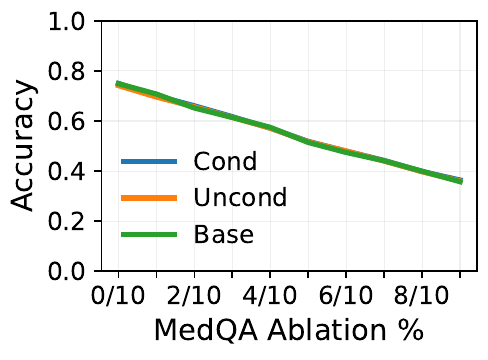}
\end{minipage}%
\begin{minipage}{0.32\linewidth}
    \includegraphics[width=1.0\linewidth]{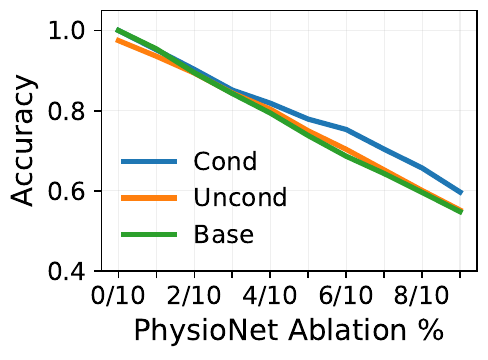}
\end{minipage}%

\caption{\textbf{MCal does not harm classifier accuracy.}
{\color{rev1}
Across different ablation levels, the accuracy of uncalibrated vs. calibrated classifiers is comparable.
This also holds when the input is clean (ablation fraction zero).
(Left) MRI, (Middle) MedQA, (Right) PhysioNet.
}
}
\label{fig:main_paper_mcal_accuracy}
\end{figure*}

\paragraph{{\color{rev1}Question 5:} How does MCal affect classifier accuracy?}
MCal fundamentally alters a pretrained base classifier \(f\) into \(\tilde{f}\), which is then deployed to downstream applications.
Importantly, the accuracy of \(\tilde{f}\) must remain high, even when it is optimized on ablated images~\cref{eqn:finetune_loss}.
We show in~\cref{fig:main_paper_mcal_accuracy} that this is indeed the case: we compare the uncalibrated base model against both ablation rate-conditioned and unconditioned calibrators.
We observe that both forms of calibration improve classifier accuracy at all ablation rates, where we recall that the clean image is obtained at an ablation rate of zero.
Aligning with earlier findings, we see that the conditioned calibrator outperforms the unconditioned calibrator.




\section{Related Work}
\label{sec:related}

\paragraph{Missingness Bias in Explainability.}
Missingness bias~\citep{jain2022missingness} denotes the systematic distortions that arise when attribution methods ``remove'' features via ablations, e.g., with black pixels, zero-valued embeddings, or special \texttt{[MASK]} tokens.
Such ablated inputs are often out-of-distribution with respect to the model's training distribution, which can result in erratic predictions, inflated confidences, and unstable feature importance scores~\citep{hooker2019benchmark,vo2024explainability}.
In particular, importance scores can vary drastically with the chosen replacement technique~\citep{sturmfels2020visualizing,haug2021baselines} and can even be exploited adversarially~\citep{slack2020fooling}.
Consequently, feature-based explanations commonly reflect ablation artifacts rather than genuine model reasoning~\citep{hase2021out}, which risks eroding trust in high-stakes settings.
In addition to the methods described earlier in~\cref{sec:challenge_mitigation}, there are several benchmarks related to missingness bias~\citep{liu2021synthetic,hesse2023funnybirds,duan2024evaluation}.

\paragraph{Calibration Methods.}
A calibration method post-hoc rescales the logits or probabilities of a model prediction without modifying the underlying model weights.
Classic techniques include binning~\citep{zadrozny2001obtaining}, Platt scaling~\citep{platt1999probabilistic}, and temperature scaling~\citep{guo2017calibration}.
This is often used to improve and calibrate model predictions under input distribution shift, such as in autonomous driving~\citep{tomani2021post}, healthcare~\citep{shashikumar2023unsupervised}, and LLMs~\citep{kumar2022calibrated}.
A closely related work is~\citet{decker2025improving}, which addresses perturbation-induced miscalibration but focuses on confidence rather than the systematic class-skew.


\paragraph{Robust and Reliable Explanations.}
There is much interest in the development of robust explanations for machine learning models.
Notable efforts include the development of benchmarks for explanations, particularly feature attribution methods~\citep{adebayo2018sanity,adebayo2022post,agarwal2022openxai,dinu2020challenging,jin2024fix,havaldar2025t,kindermans2019reliability,nauta2023anecdotal,rong2022consistent,duan2024evaluation,zhou2022feature}.
There is also interest in formally certifying explanations~\citep{bassan2023towards,bassan2025explaining,xue2023stability,jin2025probabilistic,lin2023robustness,you2025probabilistic,bassan2025makes}.
Other efforts, such as this work, involve adapting classifiers to be more robust to input ablations in feature attributions.

\section{Discussion, Future Directions, and Conclusion}
\label{sec:discussion}



\paragraph{Calibration Design.}
While other calibrator parametrizations are viable, any non-convex parametrization of the objective risks losing guarantees of optimality convergence.
In turn, this risks introducing undesirable behavior, such as sensitivity to the initialization of calibrator parameters.
Additionally, observe that the measure of missingness bias (\cref{eqn:mb_def}) is different than the calibrator optimization objective.
This is because the missingness bias measure is not differentiable due to the one-hot \(\msf{Class}\) function, which motivated us to search for reasonable alternatives, e.g., the standard cross-entropy objective in classification.
While it would be interesting to explore, for instance, differentiable relaxations of~\cref{eqn:mb_def}, we leave this to future work.

{\color{rev1}
\paragraph{Missingness Bias and Data Variability.}
Visual medical datasets, e.g., Chest X-rays, often exhibit lower variability than image datasets like ImageNet~\citep{deng2009imagenet}.
Despite this, missingness bias can persist in models trained on datasets such as ImageNet~\citep{jain2022missingness}.
However, it is not known how missingness bias varies as a function of both dataset variability and model architecture.
}

\paragraph{Beyond Explainability.}
Missingness bias is a fundamental risk when evaluating feature subsets on a model that is not explicitly designed to handle missing data.
While we are primarily motivated by challenges in explainability, this work has broader applications.
In vision, model evaluation with masked images is a standard practice.
In language modeling, a token's embedding is often dependent on its position, meaning that ablations are position-sensitive, whether via the attention mask, subsetting the input sequence, or replacement with special \texttt{[MASK]} tokens.



\paragraph{Limitations.}
MCal requires access to a collection of clean and ablated prediction logits, which may not always be available, such as for some API-based LLMs.
Even then, gradient-based optimization is only guaranteed to converge to global optimality under certain parameterizations of the calibrator.
Overfitting is also a potential risk, particularly in settings with a large number of possible classes (e.g., a language model's vocabulary size), in which case regularization is warranted.
Furthermore, MCal is only intended to mitigate missingness bias, and other forms of bias in the model and data may still be propagated.
{\color{rev1}
While our experiments show that linear corrections on the logits suffice to mitigate missingness bias, and hence our use of the description ``superficial'', it may be the case that for missingness bias in certain model classes, it is harder to mitigate in this manner.
}

\paragraph{Future Work.}
One direction is to investigate the theoretical guarantees and empirical performance of different calibrator parametrizations, such as a one-layer feedforward network instead of an affine transform.
Another extension is to broaden our study on the performance of calibrated classifiers in explainability, such as with respect to the explanation methods and metrics surveyed in~\cref{sec:related}.
It would be interesting to explore methods for mitigating missingness bias when prediction logits are not available, a common restriction for API-based large language models.
Additionally, the idea of calibration may also be extended to other instances of domain shift and out-of-distribution inputs, which are prevalent throughout machine learning literature.

\paragraph{Conclusion.}
Missingness bias threatens the reliability of popular explanation methods and techniques, a problem magnified by the increasing impracticality of existing engineering-intensive solutions.
To overcome this, we introduce MCal, a lightweight calibration method that requires only a collection of clean and ablated prediction pairs.
We demonstrate that a simple, affine parametrization of the calibrator offers strong theoretical guarantees while achieving empirical performance that often outperforms more expensive baselines.
In summary, MCal is an efficient, model-agnostic calibration scheme that improves the reliability of popular feature-based explanation methods.


\paragraph{Ethics Statement.}
This work presents a method for improving the reliability of feature-based explanation methods.
Our intended audience includes researchers and practitioners interested in explainability.
While there may be potential for misuse, we do not believe that the contents of this paper warrant concern.

\paragraph{Reproducibility Statement.}
All code and experiments for this paper are available at:
\begin{center}
\url{https://github.com/ShaileshSridhar2403/MCal}
\end{center}

\bibliography{references}

@article{jain2022missingness,
  title={Missingness bias in model debugging},
  author={Jain, Saachi and Salman, Hadi and Wong, Eric and Zhang, Pengchuan and Vineet, Vibhav and Vemprala, Sai and Madry, Aleksander},
  journal={arXiv preprint arXiv:2204.08945},
  year={2022}
}

@inproceedings{ribeiro2016should,
  title={" Why should i trust you?" Explaining the predictions of any classifier},
  author={Ribeiro, Marco Tulio and Singh, Sameer and Guestrin, Carlos},
  booktitle={Proceedings of the 22nd ACM SIGKDD international conference on knowledge discovery and data mining},
  pages={1135--1144},
  year={2016}
}

@article{lundberg2017unified,
  title={A unified approach to interpreting model predictions},
  author={Lundberg, Scott M and Lee, Su-In},
  journal={Advances in neural information processing systems},
  volume={30},
  year={2017}
}

@inproceedings{guo2017calibration,
  title={On calibration of modern neural networks},
  author={Guo, Chuan and Pleiss, Geoff and Sun, Yu and Weinberger, Kilian Q},
  booktitle={International conference on machine learning},
  pages={1321--1330},
  year={2017},
  organization={PMLR}
}

@misc{meta2024llama3,
  title={Llama 3 Model Card},
  author={AI@Meta},
  year={2024},
  url={https://github.com/meta-llama/llama3/blob/main/MODEL_CARD.md}
}

@article{hooker2019benchmark,
  title={A benchmark for interpretability methods in deep neural networks},
  author={Hooker, Sara and Erhan, Dumitru and Kindermans, Pieter-Jan and Kim, Been},
  journal={Advances in neural information processing systems},
  volume={32},
  year={2019}
}

@article{hase2021out,
  title={The out-of-distribution problem in explainability and search methods for feature importance explanations},
  author={Hase, Peter and Xie, Harry and Bansal, Mohit},
  journal={Advances in neural information processing systems},
  volume={34},
  pages={3650--3666},
  year={2021}
}

@article{agarwal2020explaining,
  title={Explaining image classifiers by removing input features using generative models}, 
  author={Chirag Agarwal and Anh Nguyen},
  journal={arXiv preprint arXiv:1910.04256},
  year={2020} 
}

@inproceedings{kim2020interpretation,
  title={Interpretation of NLP models through input marginalization},
  author={Kim, Siwon and Yi, Jihun and Kim, Eunji and Yoon, Sungroh},
  booktitle={Proceedings of the 2020 Conference on Empirical Methods in Natural Language Processing (EMNLP)},
  pages={3154--3167},
  year={2020}
}

@inproceedings{irvin2019chexpert,
  title={Chexpert: A large chest radiograph dataset with uncertainty labels and expert comparison},
  author={Irvin, Jeremy and Rajpurkar, Pranav and Ko, Michael and Yu, Yifan and Ciurea-Ilcus, Silviana and Chute, Chris and Marklund, Henrik and Haghgoo, Behzad and Ball, Robyn and Shpanskaya, Katie and others},
  booktitle={Proceedings of the AAAI conference on artificial intelligence},
  volume={33},
  pages={590--597},
  year={2019}
}

@article{dosovitskiy2020image,
  title={An image is worth 16x16 words: Transformers for image recognition at scale},
  author={Dosovitskiy, Alexey and Beyer, Lucas and Kolesnikov, Alexander and Weissenborn, Dirk and Zhai, Xiaohua and Unterthiner, Thomas and Dehghani, Mostafa and Minderer, Matthias and Heigold, Georg and Gelly, Sylvain and others},
  journal={arXiv preprint arXiv:2010.11929},
  year={2020}
}

@inproceedings{sundararajan2017axiomatic,
  title={Axiomatic attribution for deep networks},
  author={Sundararajan, Mukund and Taly, Ankur and Yan, Qiqi},
  booktitle={International conference on machine learning},
  pages={3319--3328},
  year={2017},
  organization={PMLR}
}

@article{sturmfels2020visualizing,
  title={Visualizing the impact of feature attribution baselines},
  author={Sturmfels, Pascal and Lundberg, Scott and Lee, Su-In},
  journal={Distill},
  volume={5},
  number={1},
  pages={e22},
  year={2020}
}

@inproceedings{deng2009imagenet,
  title={Imagenet: A large-scale hierarchical image database},
  author={Deng, Jia and Dong, Wei and Socher, Richard and Li, Li-Jia and Li, Kai and Fei-Fei, Li},
  booktitle={2009 IEEE conference on computer vision and pattern recognition},
  pages={248--255},
  year={2009},
  organization={Ieee}
}

@article{spanhol2015dataset,
  title={A dataset for breast cancer histopathological image classification},
  author={Spanhol, Fabio A and Oliveira, Luiz S and Petitjean, Caroline and Heutte, Laurent},
  journal={Ieee transactions on biomedical engineering},
  volume={63},
  number={7},
  pages={1455--1462},
  year={2015},
  publisher={IEEE}
}

@inproceedings{hesse2023funnybirds,
  title={Funnybirds: A synthetic vision dataset for a part-based analysis of explainable ai methods},
  author={Hesse, Robin and Schaub-Meyer, Simone and Roth, Stefan},
  booktitle={Proceedings of the IEEE/CVF International Conference on Computer Vision},
  pages={3981--3991},
  year={2023}
}

@article{liu2021synthetic,
  title={Synthetic benchmarks for scientific research in explainable machine learning},
  author={Liu, Yang and Khandagale, Sujay and White, Colin and Neiswanger, Willie},
  journal={arXiv preprint arXiv:2106.12543},
  year={2021}
}

@article{ancona2017towards,
  title={Towards better understanding of gradient-based attribution methods for deep neural networks},
  author={Ancona, Marco and Ceolini, Enea and {\"O}ztireli, Cengiz and Gross, Markus},
  journal={arXiv preprint arXiv:1711.06104},
  year={2017}
}

@article{chang2018explaining,
  title={Explaining image classifiers by counterfactual generation},
  author={Chang, Chun-Hao and Creager, Elliot and Goldenberg, Anna and Duvenaud, David},
  journal={arXiv preprint arXiv:1807.08024},
  year={2018}
}

@inproceedings{zadrozny2001obtaining,
  title={Obtaining calibrated probability estimates from decision trees and naive Bayesian classifiers},
  author={Zadrozny, Bianca and Elkan, Charles},
  booktitle={Proceedings of the Eighteenth International Conference on Machine Learning},
  pages={609--616},
  year={2001}
}

@article{platt1999probabilistic,
  title={Probabilistic outputs for support vector machines and comparisons to regularized likelihood methods},
  author={Platt, John and others},
  journal={Advances in large margin classifiers},
  volume={10},
  number={3},
  pages={61--74},
  year={1999},
  publisher={Cambridge, MA}
}

@inproceedings{duan2024evaluation,
  title={On the Evaluation Consistency of Attribution-Based Explanations},
  author={Duan, Jiarui and Li, Haoling and Zhang, Haofei and Jiang, Hao and Xue, Mengqi and Sun, Li and Song, Mingli and Song, Jie},
  booktitle={European Conference on Computer Vision},
  pages={206--224},
  year={2024},
  organization={Springer}
}

@article{jin2025probabilistic,
  title={Probabilistic Stability Guarantees for Feature Attributions},
  author={Jin, Helen and Xue, Anton and You, Weiqiu and Goel, Surbhi and Wong, Eric},
  journal={arXiv preprint arXiv:2504.13787},
  year={2025}
}

@article{xue2023stability,
  title={Stability guarantees for feature attributions with multiplicative smoothing},
  author={Xue, Anton and Alur, Rajeev and Wong, Eric},
  journal={Advances in Neural Information Processing Systems},
  volume={36},
  pages={62388--62413},
  year={2023}
}

@article{goldwasser2024provably,
  title={Provably Stable Feature Rankings with SHAP and LIME},
  author={Goldwasser, Jeremy and Hooker, Giles},
  journal={arXiv preprint arXiv:2401.15800},
  year={2024}
}

@inproceedings{bassan2023towards,
  title={Towards formal XAI: formally approximate minimal explanations of neural networks},
  author={Bassan, Shahaf and Katz, Guy},
  booktitle={International Conference on Tools and Algorithms for the Construction and Analysis of Systems},
  pages={187--207},
  year={2023},
  organization={Springer}
}

@inproceedings{slack2020fooling,
  title={Fooling lime and shap: Adversarial attacks on post hoc explanation methods},
  author={Slack, Dylan and Hilgard, Sophie and Jia, Emily and Singh, Sameer and Lakkaraju, Himabindu},
  booktitle={Proceedings of the AAAI/ACM Conference on AI, Ethics, and Society},
  pages={180--186},
  year={2020}
}

@inproceedings{balasubramanian2023towards,
  title={Towards Improved Input Masking for Convolutional Neural Networks},
  author={Balasubramanian, Sriram and Feizi, Soheil},
  booktitle={Proceedings of the IEEE/CVF International Conference on Computer Vision},
  pages={1855--1865},
  year={2023}
}

@article{koyuncu2024exploiting,
  title={Exploiting the Data Gap: Utilizing Non-ignorable Missingness to Manipulate Model Learning},
  author={Koyuncu, Deniz and Gittens, Alex and Yener, B{\"u}lent and Yung, Moti},
  journal={arXiv preprint arXiv:2409.04407},
  year={2024}
}

@article{joe2022exploiting,
  title={Exploiting missing value patterns for a backdoor attack on machine learning models of electronic health records: Development and validation study},
  author={Joe, Byunggill and Park, Yonghyeon and Hamm, Jihun and Shin, Insik and Lee, Jiyeon and others},
  journal={JMIR Medical Informatics},
  volume={10},
  number={8},
  pages={e38440},
  year={2022},
  publisher={JMIR Publications Inc., Toronto, Canada}
}

@article{frye2020shapley,
  title={Shapley explainability on the data manifold},
  author={Frye, Christopher and de Mijolla, Damien and Begley, Tom and Cowton, Laurence and Stanley, Megan and Feige, Ilya},
  journal={arXiv preprint arXiv:2006.01272},
  year={2020}
}

@article{chirkova2023should,
  title={Should you marginalize over possible tokenizations?},
  author={Chirkova, Nadezhda and Kruszewski, Germ{\'a}n and Rozen, Jos and Dymetman, Marc},
  journal={arXiv preprint arXiv:2306.17757},
  year={2023}
}

@article{rong2022consistent,
  title={A consistent and efficient evaluation strategy for attribution methods},
  author={Rong, Yao and Leemann, Tobias and Borisov, Vadim and Kasneci, Gjergji and Kasneci, Enkelejda},
  journal={arXiv preprint arXiv:2202.00449},
  year={2022}
}

@article{park2024geometric,
  title={Geometric Remove-and-Retrain (GOAR): Coordinate-Invariant eXplainable AI Assessment},
  author={Park, Yong-Hyun and Seo, Junghoon and Park, Bomseok and Lee, Seongsu and Jo, Junghyo},
  journal={arXiv preprint arXiv:2407.12401},
  year={2024}
}

@article{bassan2025makes,
  title={What makes an Ensemble (Un) Interpretable?},
  author={Bassan, Shahaf and Amir, Guy and Zehavi, Meirav and Katz, Guy},
  journal={arXiv e-prints},
  pages={arXiv--2506},
  year={2025}
}

@article{havaldar2025t,
  title={T-FIX: Text-Based Explanations with Features Interpretable to eXperts},
  author={Havaldar, Shreya and Jin, Helen and Kim, Chaehyeon and Xue, Anton and You, Weiqiu and Gatti, Marco and Jain, Bhuvnesh and Qu, Helen and Hashimoto, Daniel A and Madani, Amin and others},
  journal={arXiv preprint arXiv:2511.04070},
  year={2025}
}

@article{bassan2025explaining,
  title={Explaining, fast and slow: Abstraction and refinement of provable explanations},
  author={Bassan, Shahaf and Elboher, Yizhak Yisrael and Ladner, Tobias and Althoff, Matthias and Katz, Guy},
  journal={arXiv preprint arXiv:2506.08505},
  year={2025}
}

@article{lin2023robustness,
  title={On the robustness of removal-based feature attributions},
  author={Lin, Chris and Covert, Ian and Lee, Su-In},
  journal={Advances in Neural Information Processing Systems},
  volume={36},
  pages={79613--79666},
  year={2023}
}

@inproceedings{shashikumar2023unsupervised,
  title={Unsupervised detection and correction of model calibration shift at test-time},
  author={Shashikumar, Supreeth P and Amrollahi, Fatemeh and Nemati, Shamim},
  booktitle={2023 45th Annual International Conference of the IEEE Engineering in Medicine \& Biology Society (EMBC)},
  pages={1--4},
  year={2023},
  organization={IEEE}
}

@inproceedings{tomani2021post,
  title={Post-hoc uncertainty calibration for domain drift scenarios},
  author={Tomani, Christian and Gruber, Sebastian and Erdem, Muhammed Ebrar and Cremers, Daniel and Buettner, Florian},
  booktitle={Proceedings of the IEEE/CVF Conference on Computer Vision and Pattern Recognition},
  pages={10124--10132},
  year={2021}
}

@inproceedings{kumar2022calibrated,
  title={Calibrated ensembles can mitigate accuracy tradeoffs under distribution shift},
  author={Kumar, Ananya and Ma, Tengyu and Liang, Percy and Raghunathan, Aditi},
  booktitle={Uncertainty in Artificial Intelligence},
  pages={1041--1051},
  year={2022},
  organization={PMLR}
}

@article{haug2021baselines,
  title={On baselines for local feature attributions},
  author={Haug, Johannes and Z{\"u}rn, Stefan and El-Jiz, Peter and Kasneci, Gjergji},
  journal={arXiv preprint arXiv:2101.00905},
  year={2021}
}

@article{vo2024explainability,
  title={Explainability of Machine Learning Models under Missing Data},
  author={Vo, Tuan L and Nguyen, Thu and Lopez-Ramos, Luis M and Hammer, Hugo L and Riegler, Michael A and Halvorsen, Pal},
  journal={arXiv preprint arXiv:2407.00411},
  year={2024}
}

@misc{nickparvar2021mri,
  title={Brain Tumor MRI Dataset},
  url={https://www.kaggle.com/dsv/2645886},
  DOI={10.34740/KAGGLE/DSV/2645886},
  publisher={Kaggle},
  author={Msoud Nickparvar},
  year={2021}
}

@inproceedings{pal2022medmcqa,
  title={Medmcqa: A large-scale multi-subject multi-choice dataset for medical domain question answering},
  author={Pal, Ankit and Umapathi, Logesh Kumar and Sankarasubbu, Malaikannan},
  booktitle={Conference on health, inference, and learning},
  pages={248--260},
  year={2022},
  organization={PMLR}
}

@article{jin2021disease,
  title={What disease does this patient have? a large-scale open domain question answering dataset from medical exams},
  author={Jin, Di and Pan, Eileen and Oufattole, Nassim and Weng, Wei-Hung and Fang, Hanyi and Szolovits, Peter},
  journal={Applied Sciences},
  volume={11},
  number={14},
  pages={6421},
  year={2021},
  publisher={MDPI}
}

@article{kingma2014adam,
  title={Adam: A method for stochastic optimization},
  author={Kingma, Diederik P and Ba, Jimmy},
  journal={arXiv preprint arXiv:1412.6980},
  year={2014}
}

@inproceedings{chen2016xgboost,
  title={Xgboost: A scalable tree boosting system},
  author={Chen, Tianqi and Guestrin, Carlos},
  booktitle={Proceedings of the 22nd acm sigkdd international conference on knowledge discovery and data mining},
  pages={785--794},
  year={2016}
}

@article{adebayo2018sanity,
  title={Sanity checks for saliency maps},
  author={Adebayo, Julius and Gilmer, Justin and Muelly, Michael and Goodfellow, Ian and Hardt, Moritz and Kim, Been},
  journal={Advances in neural information processing systems},
  volume={31},
  year={2018}
}

@inproceedings{adebayo2022post,
  title={Post hoc explanations may be ineffective for detecting unknown spurious correlation},
  author={Adebayo, Julius and Muelly, Michael and Abelson, Harold and Kim, Been},
  booktitle={International conference on learning representations},
  year={2022}
}

@article{agarwal2022openxai,
  title={Openxai: Towards a transparent evaluation of model explanations},
  author={Agarwal, Chirag and Krishna, Satyapriya and Saxena, Eshika and Pawelczyk, Martin and Johnson, Nari and Puri, Isha and Zitnik, Marinka and Lakkaraju, Himabindu},
  journal={Advances in neural information processing systems},
  volume={35},
  pages={15784--15799},
  year={2022}
}

@article{dinu2020challenging,
  title={Challenging common interpretability assumptions in feature attribution explanations},
  author={Dinu, Jonathan and Bigham, Jeffrey and Kolter, J Zico},
  journal={arXiv preprint arXiv:2012.02748},
  year={2020}
}

@article{jin2024fix,
  title={The FIX Benchmark: Extracting Features Interpretable to eXperts},
  author={Jin, Helen and Havaldar, Shreya and Kim, Chaehyeon and Xue, Anton and You, Weiqiu and Qu, Helen and Gatti, Marco and Hashimoto, Daniel A and Jain, Bhuvnesh and Madani, Amin and others},
  journal={arXiv preprint arXiv:2409.13684},
  year={2024}
}

@article{hurst2024gpt,
  title={Gpt-4o system card},
  author={Hurst, Aaron and Lerer, Adam and Goucher, Adam P and Perelman, Adam and Ramesh, Aditya and Clark, Aidan and Ostrow, AJ and Welihinda, Akila and Hayes, Alan and Radford, Alec and others},
  journal={arXiv preprint arXiv:2410.21276},
  year={2024}
}

@article{kindermans2019reliability,
  title={The (un) reliability of saliency methods},
  author={Kindermans, Pieter-Jan and Hooker, Sara and Adebayo, Julius and Alber, Maximilian and Sch{\"u}tt, Kristof T and D{\"a}hne, Sven and Erhan, Dumitru and Kim, Been},
  journal={Explainable AI: Interpreting, explaining and visualizing deep learning},
  pages={267--280},
  year={2019},
  publisher={Springer}
}

@article{nauta2023anecdotal,
  title={From anecdotal evidence to quantitative evaluation methods: A systematic review on evaluating explainable ai},
  author={Nauta, Meike and Trienes, Jan and Pathak, Shreyasi and Nguyen, Elisa and Peters, Michelle and Schmitt, Yasmin and Schl{\"o}tterer, J{\"o}rg and Van Keulen, Maurice and Seifert, Christin},
  journal={ACM Computing Surveys},
  volume={55},
  number={13s},
  pages={1--42},
  year={2023},
  publisher={ACM New York, NY}
}

@misc{campos2000cardiotocography,
  author       = {Campos, D. and Bernardes, J.},
  title        = {{Cardiotocography}},
  year         = {2000},
  howpublished = {UCI Machine Learning Repository},
  note         = {{DOI}: https://doi.org/10.24432/C51S4N}
}

@misc{wolberg1993breast,
  author       = {Wolberg, William and Mangasarian, Olvi and Street, Nick and Street, W.},
  title        = {{Breast Cancer Wisconsin (Diagnostic)}},
  year         = {1993},
  howpublished = {UCI Machine Learning Repository},
  note         = {{DOI}: https://doi.org/10.24432/C5DW2B}
}

@inproceedings{zhou2022feature,
  title={Do feature attribution methods correctly attribute features?},
  author={Zhou, Yilun and Booth, Serena and Ribeiro, Marco Tulio and Shah, Julie},
  booktitle={Proceedings of the AAAI conference on artificial intelligence},
  volume={36},
  pages={9623--9633},
  year={2022}
}

@article{you2025probabilistic,
  title={Probabilistic Soundness Guarantees in LLM Reasoning Chains},
  author={You, Weiqiu and Xue, Anton and Havaldar, Shreya and Rao, Delip and Jin, Helen and Callison-Burch, Chris and Wong, Eric},
  journal={arXiv preprint arXiv:2507.12948},
  year={2025}
}

@article{hu2022lora,
  title={Lora: Low-rank adaptation of large language models.},
  author={Hu, Edward J and Shen, Yelong and Wallis, Phillip and Allen-Zhu, Zeyuan and Li, Yuanzhi and Wang, Shean and Wang, Lu and Chen, Weizhu and others},
  journal={ICLR},
  volume={1},
  number={2},
  pages={3},
  year={2022}
}

@article{carter2021overinterpretation,
  title={Overinterpretation reveals image classification model pathologies},
  author={Carter, Brandon and Jain, Siddhartha and Mueller, Jonas W and Gifford, David},
  journal={Advances in Neural Information Processing Systems},
  volume={34},
  pages={15395--15407},
  year={2021}
}

@article{decker2025improving,
  title={Improving Perturbation-based Explanations by Understanding the Role of Uncertainty Calibration},
  author={Decker, Thomas and Tresp, Volker and Buettner, Florian},
  journal={arXiv preprint arXiv:2511.10439},
  year={2025}
}
\bibliographystyle{iclr2026_conference}

\newpage
\appendix

\section{Additional Experiments and Details}
\label{sec:additional_experiments}

We present our experimental setup here, along with any additional experiments and relevant details.

\paragraph{Compute.}
We had access to a server with four NVIDIA H100 NVL GPUs.

\subsection{Metrics for Feature Attributions}
\label{sec:other_missingness_metrics}

Given an input \(x \in \mathbb{R}^n\) and a classifier \(f: \mbb{R}^n \to \mbb{R}^m\), a feature attribution explanation method returns a vector \(\alpha \in \mbb{R}^n\) where \(\alpha_i\) denotes the importance of feature \(x_i\).
We now discuss some metrics for evaluating the quality of the feature attribution \(\alpha\).

\paragraph{Sufficiency.}
One way to assess the quality of an attribution \(\alpha\) is by using \text{only} its top-k-selected features for prediction.
Let \(\msf{Top}_k (x, \alpha) \in \mbb{R}^n\) be the version of \(x\) where its top-k features are selected, as ranked by \(\alpha\).
Equivalently, \(\msf{Top}_k (x, \alpha)\) is the version of \(x\) where its bottom \(n-k\) features are ablated.
Then, the top-k sufficiency metric~\citep{hase2021out} is defined as:
\begin{equation}
    \msf{Sufficiency}(f, x, k) = f(x)_{\hat{y}} - f(\msf{Top}_k(x, \alpha))_{\hat{y}},
\end{equation}
where \(\hat{y} = \arg\max_{y} f(x)_y\) is the predicted class.
In this formulation, a lower sufficiency is preferable, as it indicates that the selected features can more reliably attain the confidence associated with a clean input's prediction.

\paragraph{Sensitivity.}
Conversely, one can also assess how much \textit{omitting} the top-k features would affect prediction.
Analogously to the sufficiency metric, let \(\msf{Bot}_{n-k} (x, \alpha)\) denote the version of \(x\) where the bottom \(n-k\) features are selected; i.e., the top \(k\) features are ablated.
The top-k sensitivity metric, also called \textit{comprehensiveness}~\citep{hase2021out}, is defined as:
\begin{equation}
    \text{Sensitivity}(f, x, k) = f(x)_{\hat{y}} - f(\msf{Bot}_{n-k} (x, \alpha))_{\hat{y}}
\end{equation}
A higher score indicates the features were critical to the prediction, whereas a lower score suggests that the model is more robust (i.e., less sensitive) to their inclusion.




{\color{rev1}

\begin{figure*}[t]

\centering

\begin{minipage}{0.32\linewidth}
    \includegraphics[width=1.0\linewidth]{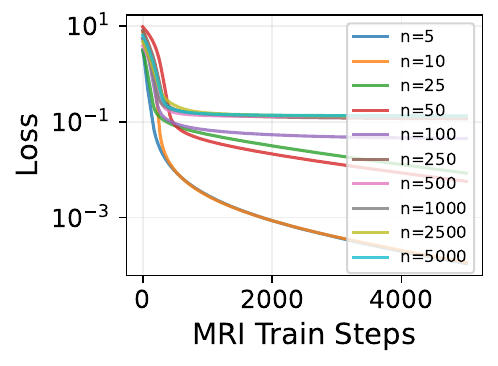}
\end{minipage}%
\begin{minipage}{0.32\linewidth}
    \includegraphics[width=1.0\linewidth]{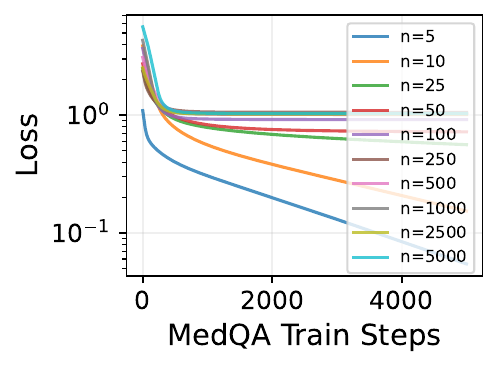}
\end{minipage}%
\begin{minipage}{0.32\linewidth}
    \includegraphics[width=1.0\linewidth]{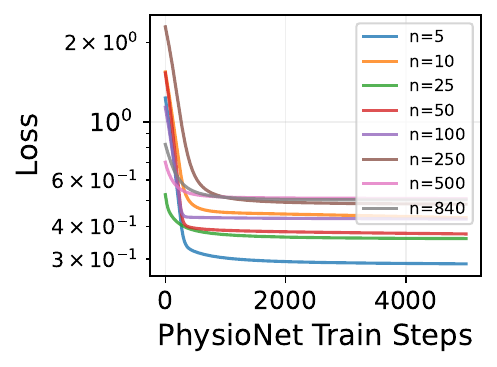}
\end{minipage}%

\begin{minipage}{0.32\linewidth}
    \includegraphics[width=1.0\linewidth]{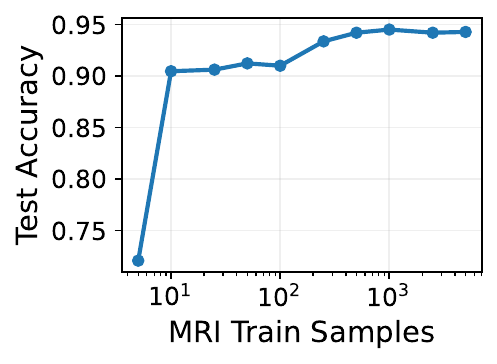}
\end{minipage}%
\begin{minipage}{0.32\linewidth}
    \includegraphics[width=1.0\linewidth]{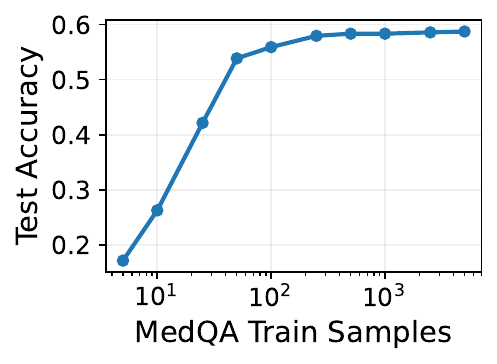}
\end{minipage}%
\begin{minipage}{0.32\linewidth}
    \includegraphics[width=1.0\linewidth]{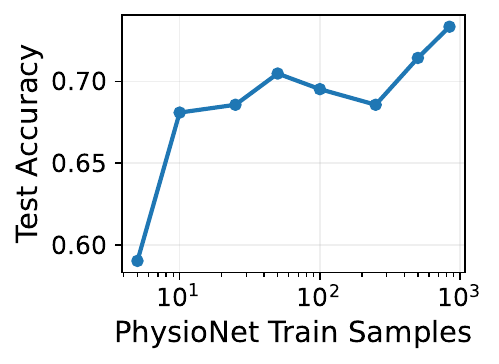}
\end{minipage}%

\begin{minipage}{0.32\linewidth}
    \includegraphics[width=1.0\linewidth]{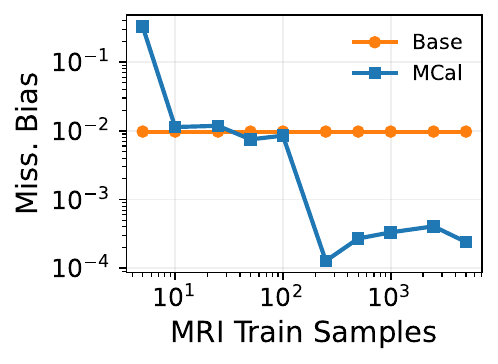}
\end{minipage}%
\begin{minipage}{0.32\linewidth}
    \includegraphics[width=1.0\linewidth]{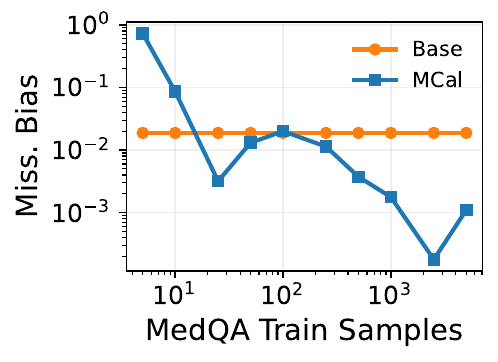}
\end{minipage}%
\begin{minipage}{0.32\linewidth}
    \includegraphics[width=1.0\linewidth]{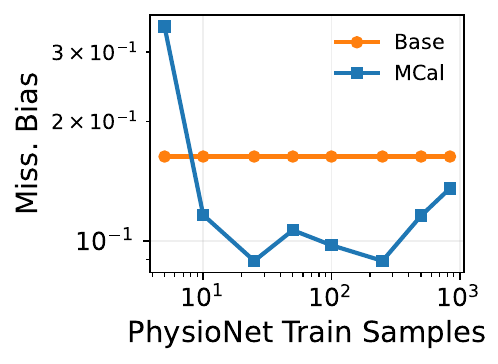}
\end{minipage}%

\begin{minipage}{0.32\linewidth}
    \includegraphics[width=1.0\linewidth]{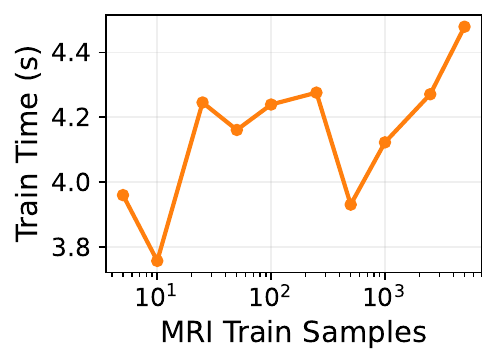}
\end{minipage}%
\begin{minipage}{0.32\linewidth}
    \includegraphics[width=1.0\linewidth]{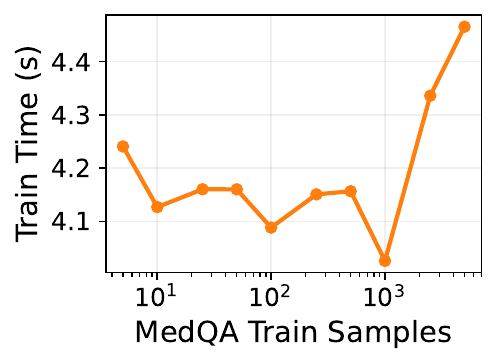}
\end{minipage}%
\begin{minipage}{0.32\linewidth}
    \includegraphics[width=1.0\linewidth]{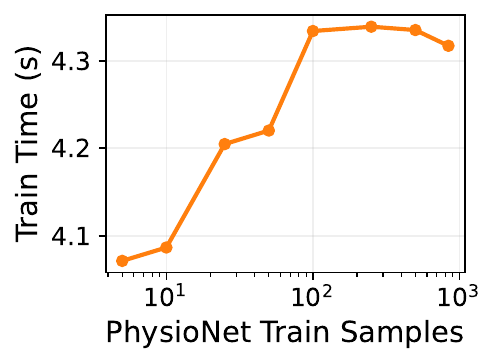}
\end{minipage}%

\caption{{\color{rev1}
\textbf{MCal training dynamics and performance on different dataset sizes.}
For different amounts (\(n\)) of clean-ablated input pairs, we show the training loss curves and test-set accuracies.
Each training run consisted of \(5000\) iterations, and all runs finished in \(\leq 5\) seconds.
}
}
\label{fig:mcal_training_dynamics}
\end{figure*}

\subsection{MCal Training Dynamics}
\label{sec:mcal_training_dynamics}

Here, we investigate the training dynamics and performance of MCal as the training set size varies.
We show the results in~\cref{fig:mcal_training_dynamics}.
In general, as the training dataset size increases, test-time accuracy increases.
When \(n\) is small, however, the problem is over-parametrized, meaning that the training loss continues to decrease without significantly improving test-time accuracy.
}

\subsection{MCal vs. Retrain}

We note an interesting phenomenon in~\cref{tab:main_paper_baselines}, where MCal often outperforms retrain-based methods.
At first glance, this is surprising because the model's representational capacity should exceed that of MCal.
We suspect that this may be due to our use of \textit{conditioned} MCal to train an ensemble of calibrator heads, which may at times be sufficient to overcome this gap in representational power.
Indeed, we observe that \textit{unconditioned} MCal, in which only a single calibrator head is used across all missingness levels, typically underperforms retraining.

In attempting to diagnose this observation with more intensive training runs (25K steps, $10^{-3}$ learning rate and 50 ablations per data point) we observe that while retrain can outperform MCal in certain situations, its unconditioned variant may also do so.

We suspect this may be due to the differences between the formulations of missingness bias in~\cref{eqn:mb_def} and the optimization objective in~\cref{eqn:finetune_loss}.
A more in-depth analysis would be warranted in future work. We show these trends in~\cref{fig:retrain_analysis}, with a focus on tabular models given their ease of retraining.



\begin{figure}[t]

\centering

\begin{minipage}{0.32\linewidth}
    \includegraphics[width=1.0\linewidth]{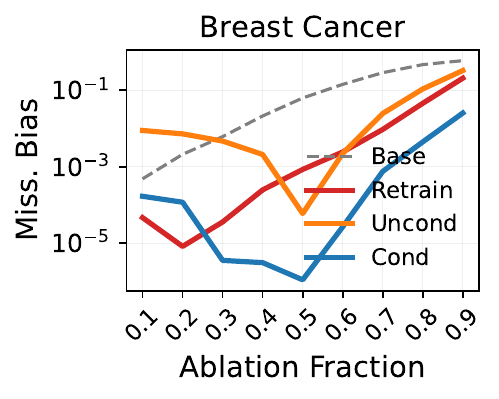}
\end{minipage}%
\begin{minipage}{0.32\linewidth}
    \includegraphics[width=1.0\linewidth]{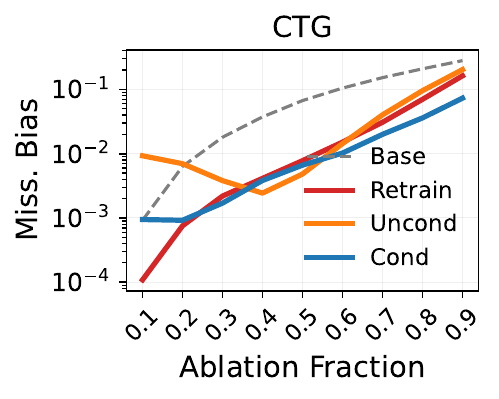}
\end{minipage}%
\begin{minipage}{0.32\linewidth}
    \includegraphics[width=1.0\linewidth]{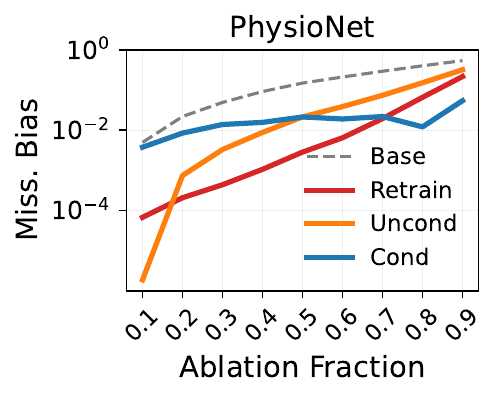}
\end{minipage}%

\caption{\color{rev1}\textbf{MCal can outperform retrain due to its specialization}
Fitting an ensemble of calibrators at a discretized set of ablation rates can even outperform retraining the model at times
(Left) Breast Cancer, (Middle) CTG, (Right) PhysioNet.
}
\label{fig:retrain_analysis}
\end{figure}

{\color{rev1}
\begin{figure*}[t]

\centering

\includegraphics[width=0.9\linewidth]{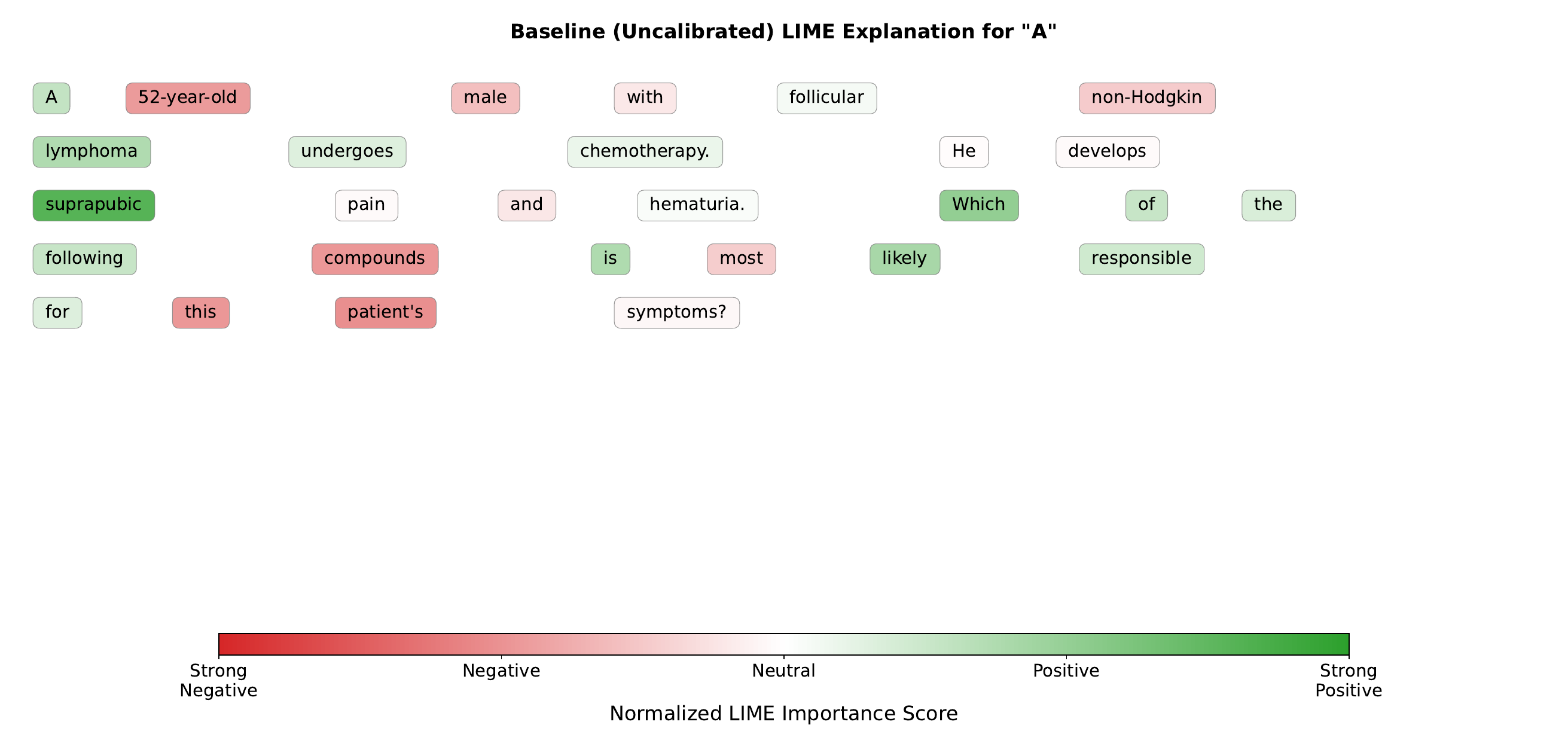}

\includegraphics[width=0.9\linewidth]{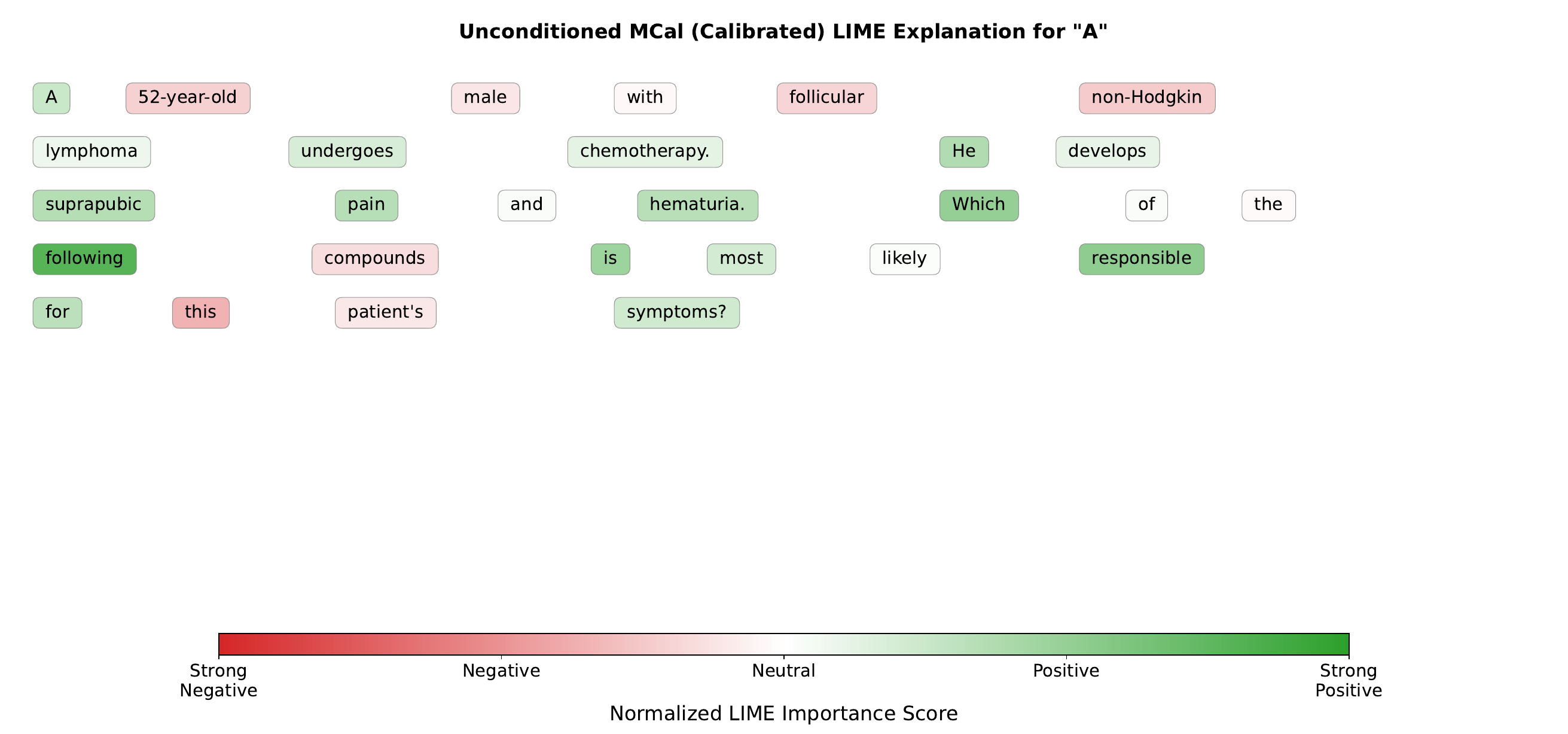}

\caption{
{\color{rev1}
\textbf{For a selected example question, MCal results in different feature importances}, for example medically relevant features/terms such as "hematuria" gain importance in the calibrated heatmap. The model task is, for the above question, to choose between the Options: A: Cyclophosphamide, B: Cisplatin, C: Mesna, D: Bleomycin.
}
}
\label{fig:openai_lime_heatmap_comparison}
\end{figure*}

\begin{figure*}[t]

\centering

\begin{minipage}{0.32\linewidth}
    \includegraphics[width=1.0\linewidth]{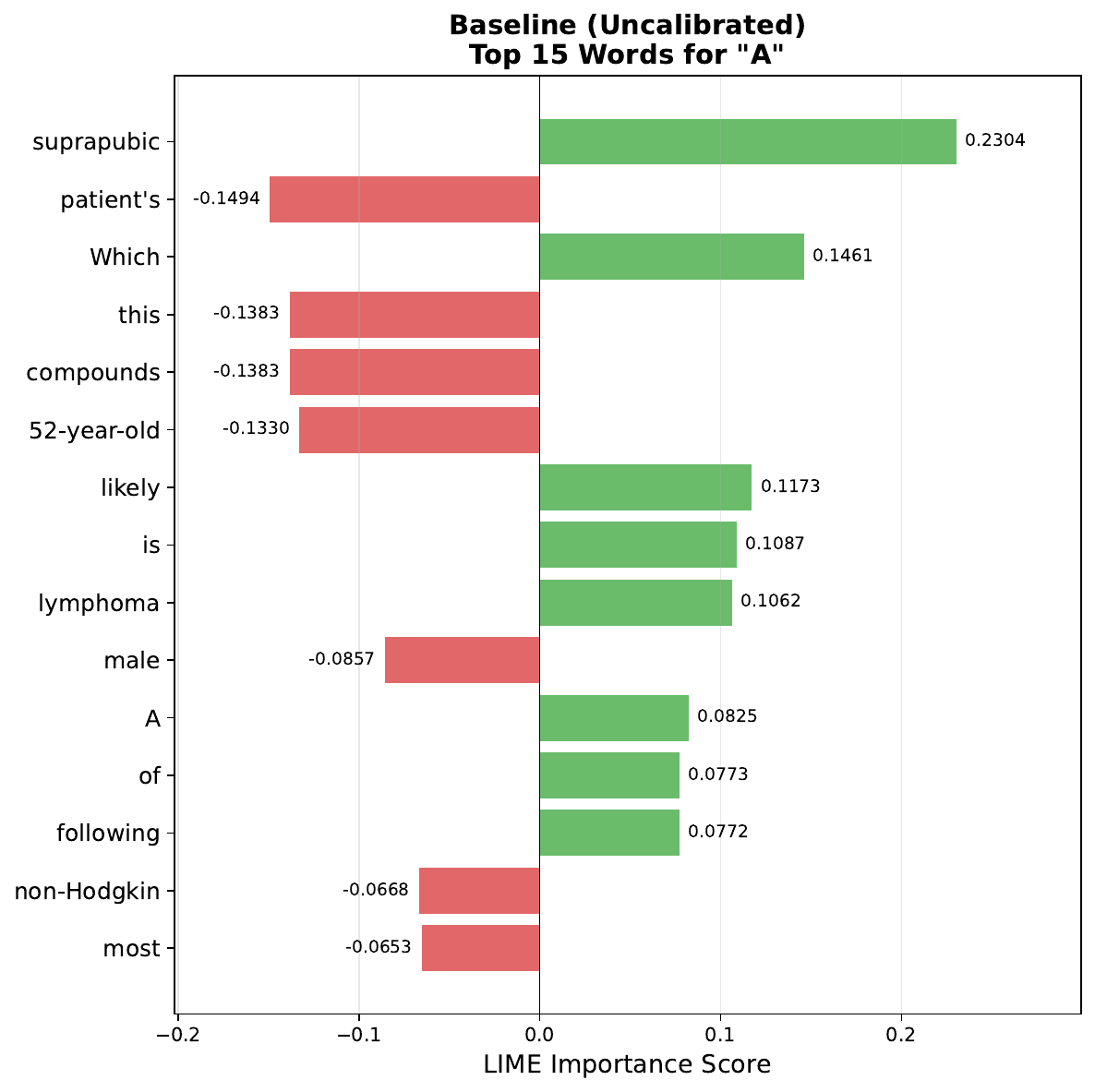}
\end{minipage}%
\begin{minipage}{0.32\linewidth}
    \includegraphics[width=1.0\linewidth]{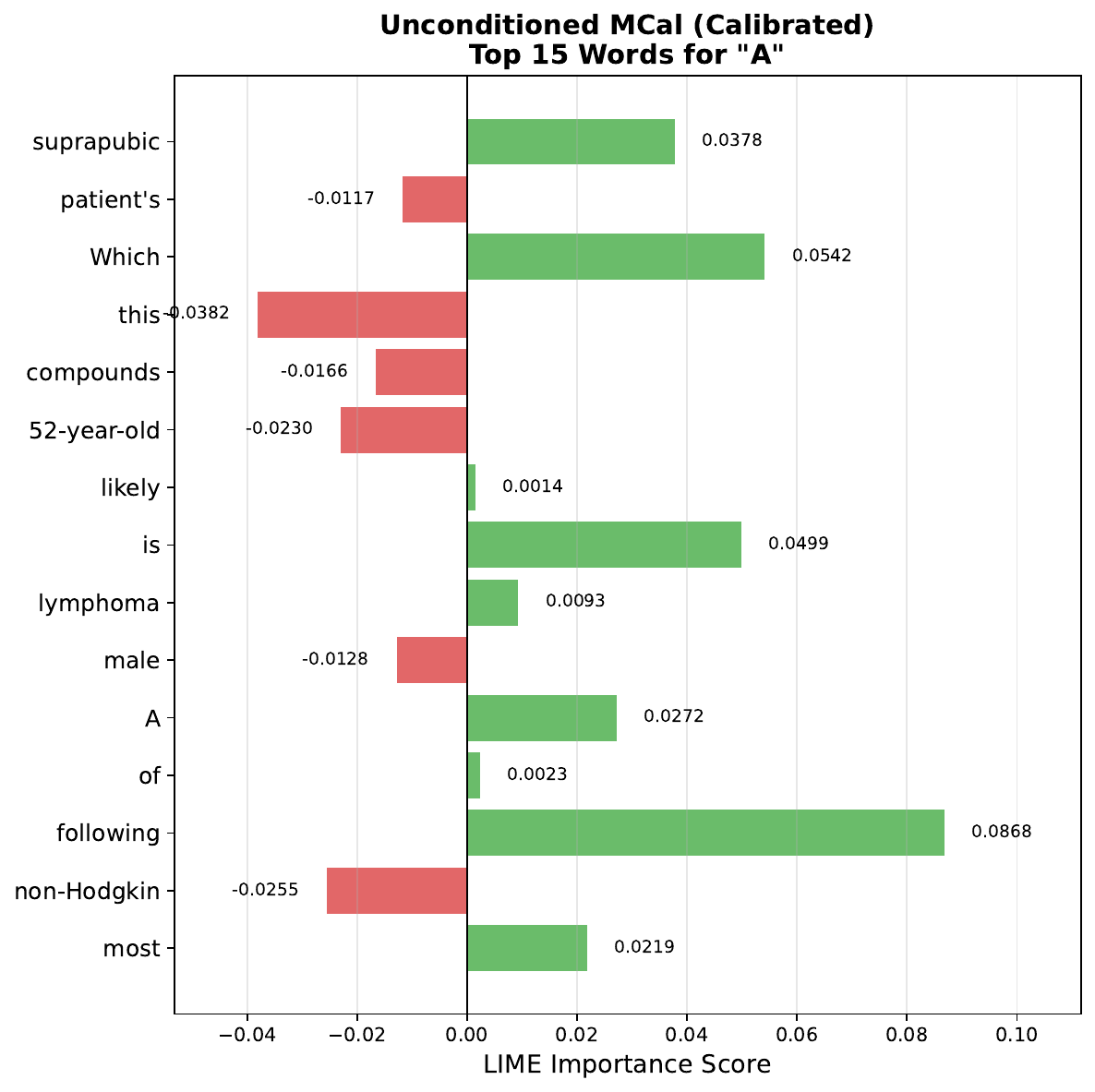}
\end{minipage}%
\begin{minipage}{0.32\linewidth}
    \includegraphics[width=1.0\linewidth]{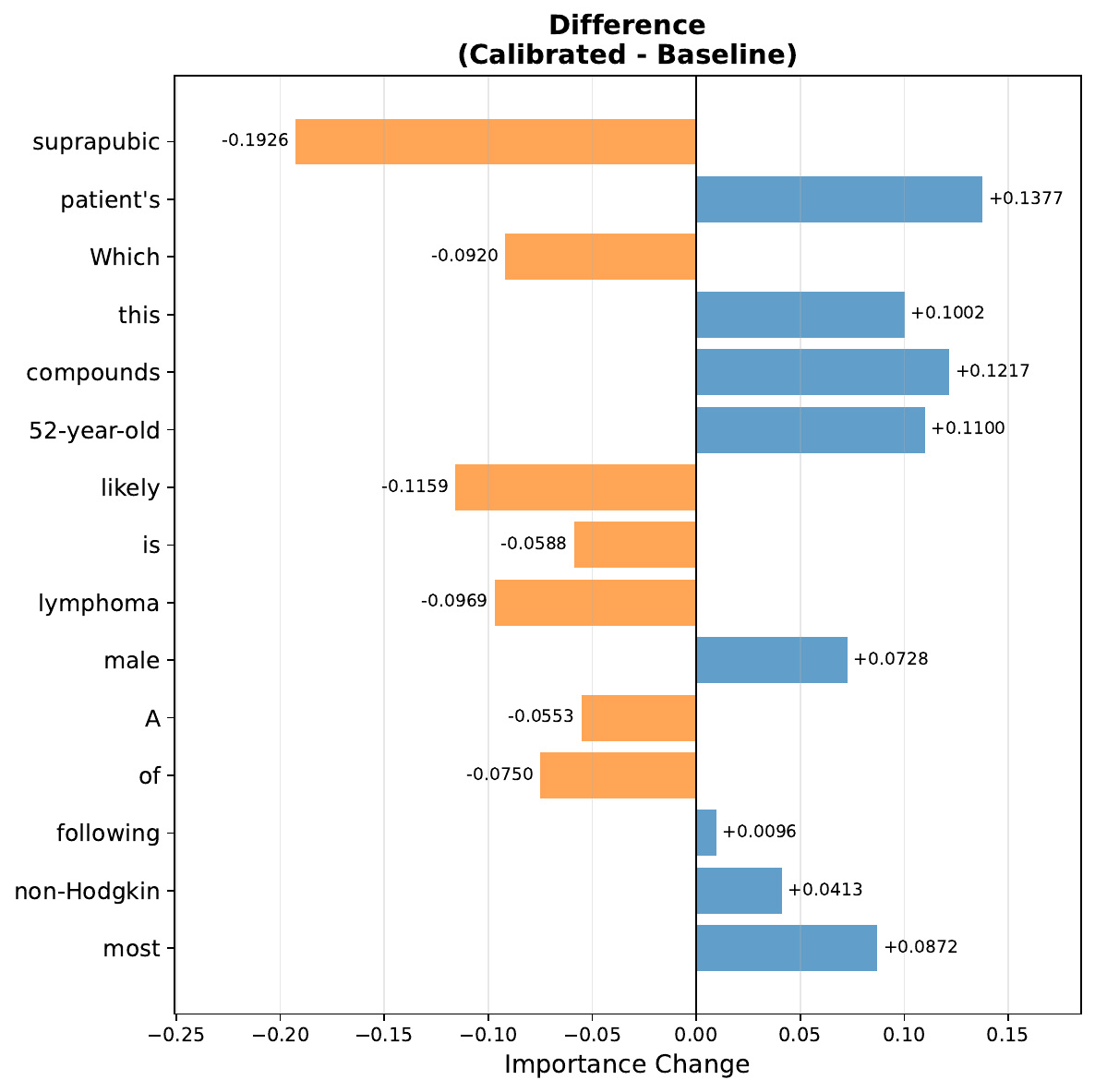}
\end{minipage}%

\caption{{\color{rev1}Direct comparison of LIME-derived feature importance values, with (a) Uncalibrated, (b) Calibrated, and (c) demonstrating the difference between the two, i.e., (b) - (a).
}
}
\label{fig:openai_barchart_comparison}
\end{figure*}

\subsection{Case Study: Integration with API-based Models}

Current-day machine learning workflows are increasingly dependent on closed-weight API-based models.
Recognizing this, we perform a case study demonstrating how MCal can be extended to such settings, assuming only access to output logits.



In particular, we demonstrate how GPT-4o-mini~\citep{hurst2024gpt} can be calibrated on selected MedQA instances in~\cref{fig:openai_lime_heatmap_comparison} and ~\cref{fig:openai_barchart_comparison}.
The calibrated model redistributes feature importance in potentially meaningful ways, elevating diagnostic symptoms like ``hematuria''(a hallmark sign of cyclophosphamide toxicity) and ``pain'', while reducing dominance of the anatomical term ``suprapubic''.
This rebalancing suggests calibration may produce more clinically-aligned and faithful explanations, though domain expert validation and further faithfulness testing are needed.

\label{sec:api_based_models}
} 




\begin{figure*}[t]

\centering

\includegraphics[width=0.9\linewidth]{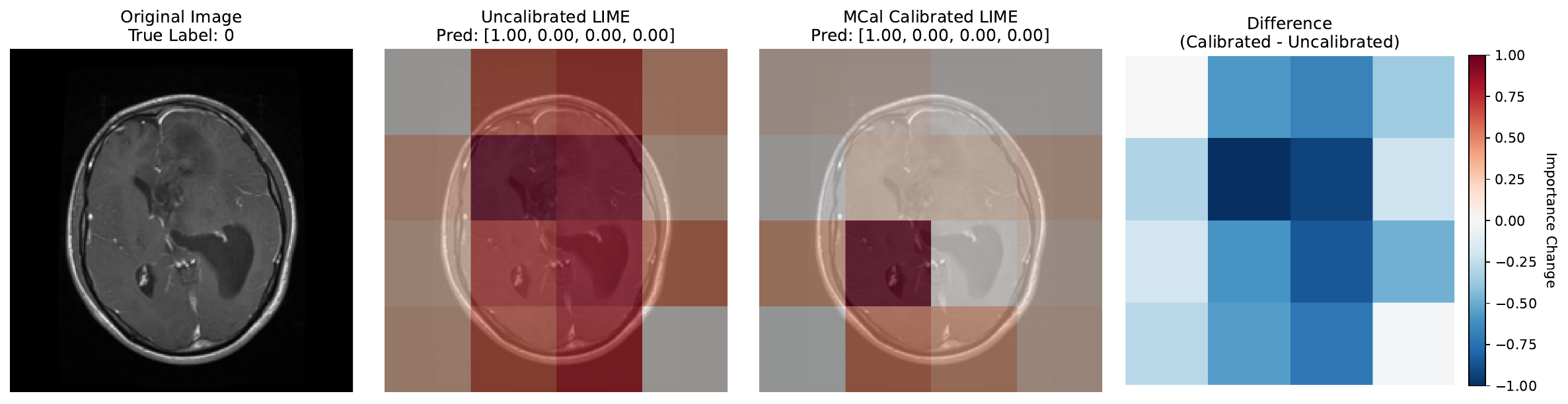}

\includegraphics[width=0.9\linewidth]{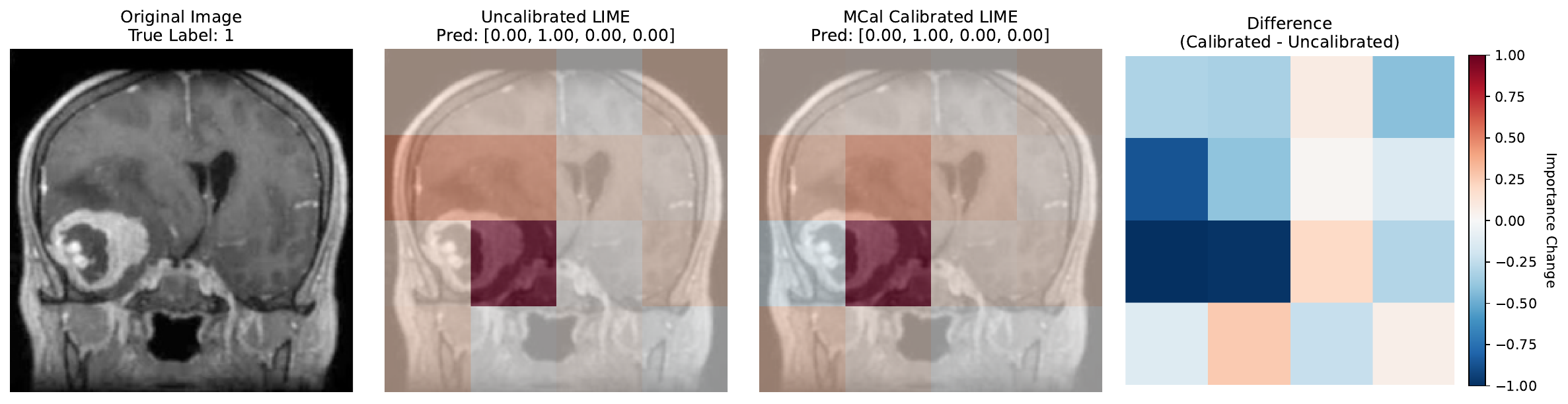}

\includegraphics[width=0.9\linewidth]{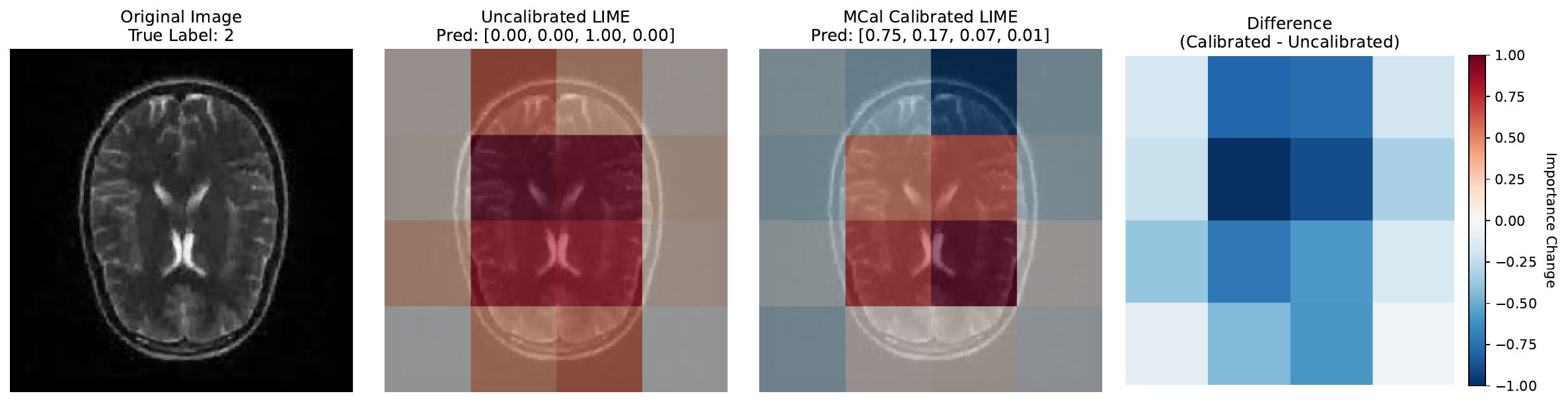}

\includegraphics[width=0.9\linewidth]{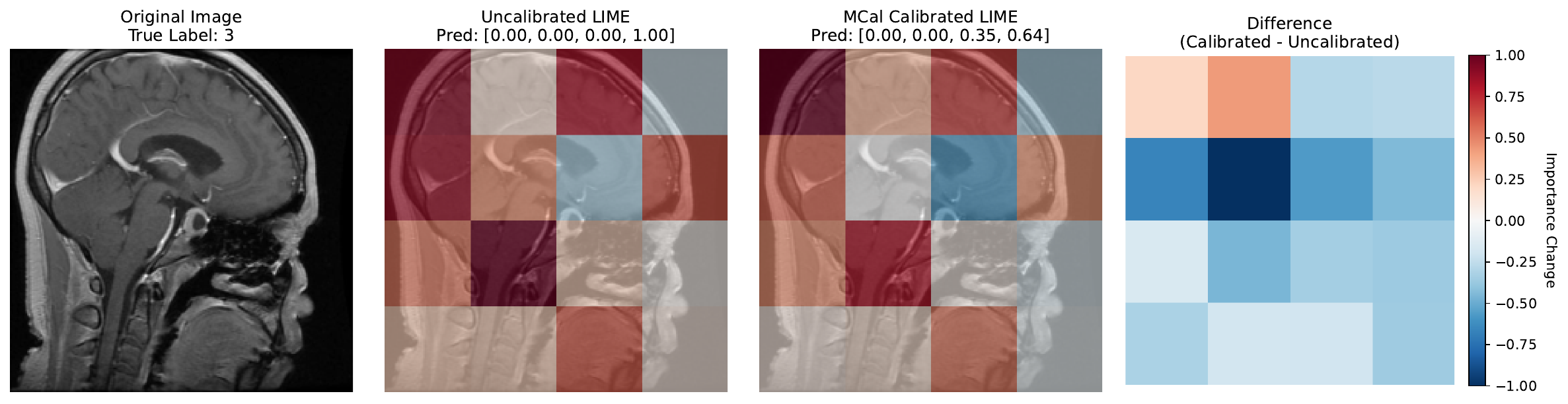}

\caption{
{\color{rev1}
\textbf{Selected examples of LIME on the MRI dataset.}
In calibrated models, we observe that LIME tends to assign less importance to border patches, where relevant features are less likely to occur.
The four classes are: Meningioma (0), Glioma (1), Pituitary Tumor (2), and No Tumor (3).
}
}
\label{fig:mri_lime_visual_examples}
\end{figure*}

\end{document}